\newcounter{mathcounter_d}
\newcounter{mathcounter_l}
\newcounter{mathcounter_c}
\newcounter{mathcounter_t}
\newenvironment{definition}{\refstepcounter{mathcounter_d} \begin{trivlist} \item[\hskip \labelsep {\bfseries Definition \arabic{mathcounter_d}.\enspace}] \it}{\end{trivlist}}
\newenvironment{lemma}{\refstepcounter{mathcounter_l} \begin{trivlist} \item[\hskip \labelsep {\bfseries Lemma \arabic{mathcounter_l}.\enspace}] \it}{\end{trivlist}}
\newenvironment{corollary}{\refstepcounter{mathcounter_c} \begin{trivlist} \item[\hskip \labelsep {\bfseries Corollary \arabic{mathcounter_c}.\enspace}]}{\end{trivlist}}
\newenvironment{theorem}{\refstepcounter{mathcounter_t} \begin{trivlist} \item[\hskip \labelsep {\bfseries Theorem \arabic{mathcounter_t}.\enspace}]}{\end{trivlist}}
\newcommand{\qed}{\nobreak \ifvmode \relax \else \ifdim\lastskip<1.5em \hskip-\lastskip \hskip1.5em plus0em minus0.5em \fi \nobreak \vrule height0.4em width0.5em depth0.25em\fi}
\newenvironment{proof}[1][Proof]{\begin{trivlist} \item[\hskip \labelsep {\bfseries #1}]}{\hfill\qed\end{trivlist}}
\newcommand{\N}{\mathbb{N}}
\newcommand{\R}{\mathbb{R}}
\newcommand{\Normal}{\mathcal{N}}
\newcommand{\Expectation}{\mathbb{E}}
\newcommand{\indicator}{\mathds{1}}
\newcommand{\Order}{\mathcal{O}}
\newcommand{\f}{\widehat{f}_{\Lambda}}
\newcommand{\closure}[1]{\overline{#1}}
\newcommand{\bmat}{\begin{pmatrix}}
\newcommand{\emat}{\end{pmatrix}}
\begin{document}

\title{Global Convergence of the (1+1) Evolution Strategy}
\author{Tobias Glasmachers\\
		Institute for Neural Computation\\
		Ruhr-University Bochum, Germany\\
		\texttt{tobias.glasmachers@ini.rub.de}}
\date{}

\maketitle

\begin{abstract}
We establish global convergence of the (1+1) evolution strategy, i.e.,
convergence to a critical point independent of the initial state.
More precisely, we show the existence of a critical limit point, using
a suitable extension of the notion of a critical point to measurable
functions.
At its core, the analysis is based on a novel progress guarantee for
elitist, rank-based evolutionary algorithms. By applying it to the
(1+1) evolution strategy we are able to provide an accurate
characterization of whether global convergence is guaranteed with full
probability, or whether premature convergence is possible.
We illustrate our results on a number of example applications ranging
from smooth (non-convex) cases over different types of saddle points and
ridge functions to discontinuous and extremely rugged problems.
\end{abstract}

\section{Introduction}
\label{section:introduction}

Global convergence of an optimization algorithm refers to convergence of
the iterates to a critical point independent of the initial state---in
contrast to local convergence, which guarantees this property only for
initial iterates in the vicinity of a critical point.%
\footnote{Some authors refer to global convergence as convergence to a
global optimum. We do not use the term in this sense.}
For example, many first order methods enjoy this
property~\citep{gilbert1992global}, while Newton's method does not. In
the realm of direct search algorithms, mesh adaptive search algorithms
are known to be globally convergent~\citep{torczon1997convergence}.

Evolution strategies (ES) are a class of randomized search heuristics
for direct search in $\R^d$. The (1+1)-ES is the maybe simplest such
method, originally developed by \cite{rechenberg:1973}.
A particularly simple variant thereof, which was first defined by
\cite{kern:2004}, is given in Algorithm~\ref{algorithm:oneplusoneES}.
Its state consists of a single parent individual $m \in \R^d$ and a step
size $\sigma > 0$. It samples a single offspring $x \in \R^d$ per
generation from the isotropic multivariate normal distribution
$\Normal(m, \sigma^2 I)$ and applies (1+1)-selection, i.e., it keeps the
better of the two points. Here, $I \in \R^{d \times d}$ denotes the
identity matrix.
The standard deviation $\sigma > 0$ of the sampling distribution, also
called \emph{global step size}, is adapted online. The mechanism
maintains a fixed success rate usually chosen as $1/5$, in accordance
with Rechenbergs original approach. It is discussed in more detail in
section~\ref{section:step-size-control}. In effect, step size control
enables linear convergence on convex quadratic functions
\citep{jaegerskuepper2006how}, and therefore locally linear convergence
on twice differentiable functions. In contrast, algorithms without step
size adaptation can converge as slowly as pure random search
\citep{hansen2015evolution}. Furthermore, being rank-based methods, ESs
are invariant to strictly monotonic transformations of objective values.
ESs tend to be robust and suitable for solving difficult problems
(rugged and multimodal fitness landscapes), a capacity that is often
attributed to invariance properties.

Although the (1+1)-ES is the oldest evolution strategy in existence, we
do not yet fully understand how generally it is applicable. In this
paper we cast this open problem into the question on which functions the
algorithm will succeed to locate a local optimum, and on which functions
it may converge prematurely, and hence fail. We aim at an as complete as
possible characterization of these different cases.

The covariance matrix adaptation evolution strategy (CMA-ES) by
\cite{hansen:2001} and its many variants mark the state-of-the-art. The
algorithm goes beyond the simple (1+1)-ES in many ways: it uses
non-elitist selection with a population, it adapts the full covariance
matrix of its sampling distribution (effectively resembling second order
methods), and it performs temporal integration of direction information
in the form of evolution paths for step size and covariance matrix
adaptation. Still, its convergence order on many relevant functions is
linear, and that is thanks to the same mechanism as in the (1+1)-ES,
namely step size adaptation.

To date, convergence guarantees for ESs are scarce. Some results exist
for convex quadratic problems, which essentially implies local
convergence on twice continuously differentiable functions.
In this situation it is natural to start with the simplest ES, which is
arguably the (1+1)-ES. The variant defined by \cite{kern:2004} is given
in Algorithm~\ref{algorithm:oneplusoneES}; it is discussed in detail in
section~\ref{section:step-size-control}.

\begin{algorithm}
\caption{(1+1)-ES}
\label{algorithm:oneplusoneES}
\begin{algorithmic}[1]
\STATE{\textbf{input} $m^{(0)} \in \R^d$, $\sigma^{(0)} > 0$}
\STATE{\textbf{parameters} $c_+ > 0$, $c_- < 0$}
\STATE {$t \leftarrow 0$}
\REPEAT
	\STATE {$\big(z^{(t)}\big) \sim \Normal(0, I)$}
	\STATE {$x^{(t)} \leftarrow m^{(t)} + \sigma^{(t)} \cdot z^{(t)}$}
	\IF {$f\big(x^{(t)}\big) \leq f\big(m^{(t)}\big)$}
		\STATE {$m^{(t+1)} \leftarrow x^{(t)}$}
		\STATE {$\sigma^{(t+1)} \leftarrow \sigma^{(t)} \cdot e^{c_+}$}
	\ELSE
		\STATE {$m^{(t+1)} \leftarrow m^{(t)}$}
		\STATE {$\sigma^{(t+1)} \leftarrow \sigma^{(t)} \cdot e^{c_-}$}
	\ENDIF
	\STATE {$t \leftarrow t + 1$}
\UNTIL{ \textit{stopping criterion is met} }
\end{algorithmic}
\end{algorithm}

\cite{jaegerskuepper2003analysis,jagerskupper2005rigorous,jaegerskuepper2006how,jaegerskuepper2006probabilistic}
analyzed the (1+1)-ES%
\footnote{Jägersküpper analyzed a different step size adaptation rule.
  However, it exhibits essentially the same dynamics as
  Algorithm~\ref{algorithm:oneplusoneES}.}
on the sphere function as well as on general convex quadratic functions.
His analysis ensures linear convergence with overwhelming probability,
i.e., with a probability of $1 - \exp \big( \Omega(d^\varepsilon) \big)$
for some $\varepsilon > 0$, where $d$ is the problem dimension. In other
words, the analysis is asymptotic in the sense $d \to \infty$, and for
fixed (finite) dimension $d \in \N$, no concrete value or bound is
attributed to this probability. A dimension-dependent convergence rate
of $\Theta(1/d)$ is obtained.

A related and more modern approach relying explicitly on drift analysis
was presented by \cite{akimoto2018drift}, showing linear convergence of
the algorithm on the sphere function, and providing an explicit,
non-asymptotic runtime bound for the first hitting time of a level set.

The analysis by \cite{auger2005convergence} is based on the stability of
the Markov chain defined by the normalized state $m / \sigma$, for a
$(1, \lambda)$-ES on the sphere function. Since the chain is shown to
converge to a stationary distribution and the problem is scale-invariant,
linear convergence or divergence is obtained, with full probability.
There exists sufficient empirical evidence for convergence, however,
this is not covered by the result.

A different approach to proving global convergence is to modify the
algorithm under consideration in a way that allows for an analysis with
well established techniques. This route was explored by
\cite{diouane2015globally}, where step size adaptation is subject to a
forcing function in order to guarantee a sufficient decrease condition,
akin to, e.g., the Wolfe conditions for inexact line search
\citep{wolfe1969convergence}.
This is a powerful approach since the resulting analysis is general in
terms of the algorithms (the same step size forcing mechanism can be
added to virtually all ES) and the objective functions (the function
must be bounded from below and Lipschitz near the limit point) at the
same time. The price is that the analysis does not apply to algorithms
regularly applied within the EC community, and that we do not obtain new
insights about the mechanisms of these algorithms. Furthermore, the
forcing function decays slowly, forcing a linearly convergent algorithm
into sub-linear convergence (but still much faster than random search).
From a more technical point of view the Lipschitz condition is
unfortunate since it is not preserved under monotonic transformations of
fitness values.
We improve on this approach by providing sufficient decrease of a
transformed objective function, which holds for all randomized elitist,
rank-based algorithms, and hence does not require a forcing function or
any other algorithmic changes.

The global convergence guarantee by \cite{akimoto2010theoretical} is
closest to the present paper. Also this analysis is extremely general in
the sense that it covers a broad range of problems and algorithms. The
objective function is assumed to be continuously differentiable, and the
only requirement for the algorithm is that it successfully diverges on a
linear function. This includes all state-of-the-art evolution strategies
and many more algorithms. Since continuously differentiable functions
are locally arbitrarily well approximated by linear functions (first
order Taylor polynomial), it is concluded that any limit point must be
stationary, since there the linear term vanishes and higher order terms
take over. This is an elegant and powerful result. Its main restriction
is that it applies only to continuously differentiable functions. This
is a huge class, but it can still be considered a relevant limitation
because on continuously differentiable problems ESs are in direct
competition with gradient-based methods, which are usually more
efficient if gradients are available.

For this reason, solving smooth and otherwise easy problems cannot be
the focus of evolution strategies. Therefore, in this paper we seek to
explore the most general class of problems that can be solved with an
evolution strategy. In other words, we aim to push the limits beyond the
well-understood cases, towards really difficult ones. Our goal is to
establish the largest possible class of problems can be be solved
reliably by an ES, and we also want to understand its limitations, i.e.,
which problems cannot be solved, and why. For this purpose, we focus on
the simplest such algorithm, namely the (1+1)-ES defined in
Algorithm~\ref{algorithm:oneplusoneES}. It turns out that the
limitations of the algorithm are closely tied to its success-based step
size adaptation mechanism. To capture this effect we introduce a novel
regularity condition ensuring proper function of success-based step-size
control. The new condition is arguably much weaker than continuous
differentiability, in a sense that will become clear as we discuss
examples and counter-examples.

From a bird's eye's perspective, our contributions are as follows:
\begin{enumerate}
\item
	we provide a general progress or decrease guarantee for rank-based
	elitist algorithms,
\item
	we show how general the (1+1)-ES is applicable, i.e., on which
	problems it will find a local optimum.
\end{enumerate}

The paper and the proofs are organized as follows. In the next section
we establish a progress guarantee for rank-based elitist algorithms.
This result is extremely general, and it is in no way tied to continuous
search spaces and the (1+1)-ES. Therefore it is stated in general
terms, in the expectation that it will prove useful for the analysis of
algorithms other than the (1+1)-ES. Its role in the global convergence
proof is to ensure a sufficient rate of optimization progress as long as
the step size is well adapted and the progress rate is bounded away from
zero.
In section~\ref{section:step-size-control} we discuss properties of the
(1+1)-ES and introduce the regularity condition. Based on this condition
we show that the step size returns infinitely often to a range where
non-trivial progress can be concluded from the decrease theorem.
Based on these achievements we establish a global convergence theorem in
section~\ref{section:global-convergence}, essentially stating that there
exists a sub-sequence of iterates converging to a critical point, the
exact notion of which is defined in
section~\ref{section:step-size-control}. We also establish a negative
result, showing that a non-optimal critical point results in premature
convergence with positive probability, which excludes global
convergence.
In section~\ref{section:case-studies} we apply the analysis to a variety
of settings and demonstrate their implications. We close with
conclusions and open questions.

\section{Optimization Progress of Rank-based Elitist Algorithms}
\label{section:decrease}

In this section, we establish a general theorem ensuring a certain rate
of optimization progress for randomized rank-based elitist algorithms.
We consider a general search space $X$. This space is equipped with a
$\sigma$-algebra and a reference measure denoted $\Lambda$. The usual
choice of the reference measure is the counting measure for discrete
spaces and the Lebesgue measure for continuous spaces. The objective
function $f : X \to \R$, to be minimized, is assumed to be measurable.
The parent selection and variation operations of the search algorithm
are also assumed to be measurable; indeed we assume that these operators
give rise to a distribution from which the offspring is sampled, and
this distribution has a density with respect to~$\Lambda$.

A rank-based optimization algorithm ignores the numerical fitness scores
($f$-values), and instead relies solely on pairwise comparisons,
resulting in exactly one of the relations $f(x) < f(x')$, $f(x) = f(x')$,
or $f(x) > f(x')$. This property renders it invariant to strictly
monotonically increasing (rank preserving) transformations of the
objective values. Therefore it ``perceives'' the objective function only
in terms of its level sets, not in terms of the actual function values.
For $f : X \to \R$ let
\begin{align*}
	L_f(y) & := \Big\{ x \in X \,\Big|\, f(x) = y \Big\} \\
	S_f^{<}(y) & := \Big\{ x \in X \,\Big|\, f(x) < y \Big\} \\
	S_f^{\leq}(y) & := \Big\{ x \in X \,\Big|\, f(x) \leq y \Big\}
\end{align*}
denote the level set of $f$, and the sub-level sets strictly below and
including level~$y \in \R$. For $m \in X$ we define the short notations
$L_f(m) := L_f\big(f(m)\big)$,
$S_f^{<}(m) := S_f^{<}\big(f(m)\big)$ and
$S_f^{\leq}(m) := S_f^{\leq}\big(f(m)\big)$.

Due to the assumption that the offspring generation distribution is
$\Lambda$-measurable, with full probability, the algorithm is invariant
to the values of the objective function restricted to zero sets (sets
$Z$ of measure zero, fulfilling $\Lambda(Z) = 0$).
The following definition captures these properties. It encodes the
``essential'' level set structure of an objective function.
\begin{definition} \label{definition:equivalence}
We call two measurable functions $f, g : X \to \R$ equivalent and write
\begin{align*}
	f \widehat\sim g
\end{align*}
if there exists a zero set $Z \subset X$ and a strictly monotonically
increasing function $\phi : f(X) \to g(X)$ such that
$g(x) = \phi\big(f(x)\big)$ for all $x \in X \setminus Z$. Here $f(X)$
and $g(X)$ denote the images of $f$ and $g$, respectively. We denote the
corresponding equivalence class in the set of measurable functions by
$[f] := \big\{g : X \to \R \,\big|\, g \widehat\sim f\big\}$.
\end{definition}
It follows immediately from the definition that the sub-level sets of
equivalent objective functions $f \widehat\sim g$ coincide outside a zero set.

In the next step we construct a canonical representative for each
equivalence class, which we can think of as a \emph{normal form} of an
objective function.
\begin{definition}
For $f : X \to \R$ we define the spatial suboptimality functions
\begin{align*}
	\f^{<} : X \to \R \cup \{\infty\},& \quad x \mapsto \Lambda \big( S_f^{<}(x) \big) \\
	\f^{\leq} : X \to \R \cup \{\infty\},& \quad x \mapsto \Lambda \big( S_f^{\leq}(x) \big),
\end{align*}
computing the volume of the success domain, i.e., the set of improving
points.
If $\f^{<}$ and $\f^{\leq}$ coincide then we drop
the upper index and simply denote the spatial suboptimality function
by~$\f$.
\end{definition}

\begin{figure}
\begin{center}
	\includegraphics[scale=1.25]{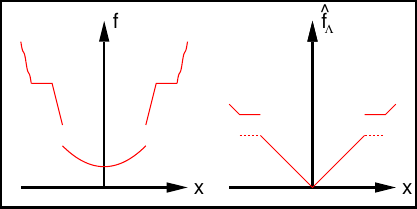}
	\caption{
		Left: objective function $f : \R \to \R$ with plateau and jump.
		Right: corresponding spatial suboptimality $\f^{<}$ (dotted) and
		$\f^{\leq}$ (solid).
		\label{figure:suboptimality1}
	}
\end{center}
\end{figure}

\begin{figure}
\begin{center}
	\includegraphics[scale=1.25]{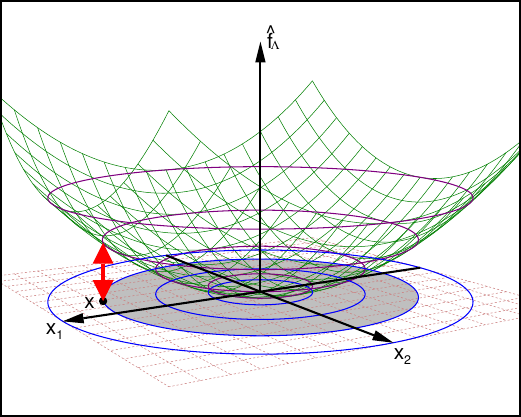}
	\caption{
		All relevant properties of the sphere function $f : \R^2 \to \R$
		for rank-based optimization are specified by its circular level
		sets, illustrated in blue on the domain (ground plane). The
		spatial sub-optimality of the point $x$ is the Lebesgue measure
		of the gray area, which coincides with the function value
		$\f(x)$ indicated by the bold red vertical arrow. In this
		example it holds $\f(x) = \pi \cdot \|x\|^2$, irrespective of
		the rank-preserving (and hence level-set preserving)
		transformation applied to $f$.
		\label{figure:suboptimality2}
	}
\end{center}
\end{figure}

The definition is illustrated with two examples in figures
\ref{figure:suboptimality1} and \ref{figure:suboptimality2}.
In the following, $m \in X$ will denote the elite (or parent) point, and
$m^{(t)}$ is the elite point in iteration $t \in \N$ of an iterative
algorithm, i.e., an evolutionary algorithm with $(1 + \lambda)$
selection.
For two very different reasons, namely
  1) to avoid divergence of the algorithm in the case of unbounded search spaces, and
  2) for simplicity of the technical arguments in the proofs,
we restrict ourselves to the case that the sub-level set
$S_{f}^{\leq}\big(m^{(0)}\big)$ of the initial iterate $m^{(0)}$ is
bounded and has finite spatial sub-optimality. For most reasonable
reference measures, boundedness implies finite spatial sub-optimality.
For $X = \R^d$ equipped with the Lebesgue measure this is equivalent
to the topological closure $\closure{S_f^{\leq}\big(m^{(0)}\big)}$ being
compact. The assumptions immediately imply that $S_{f}^{<}(y)$ and
$S_{f}^{\leq}(y)$ are bounded for all $y \leq f \big( m^{(0)} \big)$,
and that restricted to $S_{f}^{\leq}(m^{(0)})$ the functions
$\f^{<}$ and $\f^{\leq}$ take values in the bounded
range $\big[0, \f\big(m^{(0)}\big)\big]$. Since an elitist
algorithm never accepts points outside $S_{f}^{\leq}(m^{(0)})$, we will
from here on ignore the issue of infinite $\f$-values.%
\footnote{An alternative approach to avoiding infinite values is to
  apply a bounded reference measure with full support, e.g., a Gaussian
  on $\R^d$. In the absence of a uniform distribution on $X$, the price
  to pay for a bounded and everywhere positive reference measure is a
  non-uniform measure, which does not allow for a uniform, positive
  lower bound. The resulting technical complications seem to outweigh
  the slightly increased generality of the results.}

%
In the continuous case, a plateau is a level set of positive Lebesgue
measure.
When defining a local optimum as the best point within an
open neighborhood, then an interior point of a plateau
is a local optimum, which may not always be intended. Anyway, when
analyzing the (1+1)-ES we will not handle plateaus and instead assume
that level sets of $f$ are zero sets. This also implies that $\f^{\leq}$
and $\f^{<}$ agree.

\begin{lemma}
\label{lemma:equivalence}
Let $f : X \to \R$ be measurable. If $\f^{\leq}(x)$ is finite
for all $x \in X$ then it holds
$\f^{\leq} \widehat\sim f \widehat\sim \f^{<}$.
\end{lemma}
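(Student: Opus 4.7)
The plan is to exhibit $\f^{\leq}$ and $\f^{<}$ as compositions of $f$ with non-decreasing functions on $\R$, and then to turn these into the strictly increasing maps demanded by Definition~1 by absorbing their plateaus into a $\Lambda$-zero set $Z \subset X$. I describe the argument for $\f^{\leq} \widehat\sim f$; the second equivalence $f \widehat\sim \f^{<}$ is mirror-symmetric.

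First I would note that $\f^{\leq}$ factors through $f$: setting $\psi^{\leq}(y) := \Lambda\big(S_f^{\leq}(y)\big)$ defines a non-decreasing, right-continuous map $\psi^{\leq} : \R \to [0,\infty]$, finite on $f(X)$ by the hypothesis of the lemma, with $\f^{\leq}(x) = \psi^{\leq}(f(x))$ for every $x \in X$. Thus the only obstruction to using $\psi^{\leq}$ as the required $\phi$ is that it may fail to be strictly monotonic on $f(X)$.

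Next I would exploit the standard fact that the fibers of a non-decreasing function are intervals, and that those of positive length are at most countably many. Enumerate them as $\{I_j\}_{j \in \N}$ with $\psi^{\leq} \equiv c_j$ on $I_j$ and $a_j := \inf I_j$; right-continuity of $\psi^{\leq}$ ensures $a_j \in I_j$, and the identity
\[ \Lambda\big(\{x' \in X : a_j < f(x') \leq y\}\big) \;=\; \psi^{\leq}(y) - \psi^{\leq}(a_j) \;=\; 0 \qquad \text{for every } y \in I_j \]
shows that $f^{-1}(I_j \setminus \{a_j\})$ is a $\Lambda$-zero set. Taking $Z := \bigcup_j f^{-1}(I_j \setminus \{a_j\})$, a countable union of zero sets, each fiber of $\psi^{\leq}$ meets $f(X \setminus Z)$ in at most one point, so $\psi^{\leq}$ restricted to $f(X \setminus Z)$ is injective and hence strictly increasing; combined with $\f^{\leq} = \psi^{\leq} \circ f$ this yields $\f^{\leq} \widehat\sim f$.

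For $f \widehat\sim \f^{<}$ I would repeat the construction with $\psi^{<}(y) := \Lambda\big(S_f^{<}(y)\big)$, which is now left- rather than right-continuous. Accordingly, inside each positive-length fiber the endpoint that belongs to it is the supremum $b_j$, so one removes $\bigcup_j f^{-1}(I_j \setminus \{b_j\})$ instead. The main obstacle is nothing deep, but rather the careful bookkeeping at fiber boundaries: which endpoint of a plateau survives depends on the one-sided continuity of $\psi^{\leq}$ versus $\psi^{<}$, equivalently, on which side a level set of positive $\Lambda$-measure can sit. Getting this choice right is exactly what simultaneously keeps the discarded set a zero set and turns the surviving restriction strictly increasing.
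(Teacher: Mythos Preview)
Your proof is correct, and it takes a somewhat different route from the paper's. The paper argues purely in terms of level sets: first it notes that $f$-level sets refine the $\f^{\leq}$- and $\f^{<}$-level sets, and then it shows by contradiction (splitting into two cases, a ``discrete'' merge of two $f$-level sets of positive mass versus a ``continuum'' merge over an interval of $f$-values) that no $\f^{<}$-level set of positive measure can contain more than one $f$-level set up to a null set. It never writes down the transformation $\phi$ explicitly.

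Your approach is more constructive: you exhibit the candidate transformation $\psi^{\leq}$ (resp.\ $\psi^{<}$) directly, observe that its only failure of strict monotonicity lies on at most countably many plateau intervals $I_j$, and then compute that $f^{-1}(I_j\setminus\{a_j\})$ is $\Lambda$-null from the very identity $\psi^{\leq}(y)-\psi^{\leq}(a_j)=\Lambda(\{a_j<f\le y\})$. This buys you an explicit zero set $Z$ and an explicit $\phi$, and your remark about right- versus left-continuity dictating which endpoint survives is exactly the bookkeeping that the paper's two-case contradiction handles implicitly. The two arguments are essentially dual views of the same measure-theoretic fact; yours is a touch more transparent about \emph{where} the null set comes from, while the paper's makes the level-set picture more visible. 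One small point: Definition~1 asks for $\phi$ to be strictly increasing on all of $f(X)$, not just on $f(X\setminus Z)$; strictly speaking you would need to extend $\psi^{\leq}|_{f(X\setminus Z)}$ to a strictly increasing map on $f(X)$, but the paper's own proof leaves the same slack, so this is not a defect of your argument relative to theirs.
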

The proof is found in the appendix.
We use $\f^{\leq}$ and $\f^{<}$ (or simply $\f$ if possible) as a
canonical representative of its equivalence class (if the function
values are finite, but see the discussion above). These functions have
the property
\begin{align*}
	\f^{\leq}(x) = \Lambda \big( S_{\f^{\leq}}(x) \big)
	\qquad
	\f^{<}(x) = \Lambda \big( S_{\f^{<}}(x) \big)
\end{align*}
i.e., $\f$ encodes the Lebesgue measure of its own sub-level
sets.
We will measure optimization progress in terms of $\f$-values.
Decreasing the spatial suboptimality $\f$ by $\delta > 0$
amounts to reducing the volume of better points by~$\delta$.

Due to the rank-based nature of the algorithms under study we cannot
expect to fulfill a sufficient decrease condition based on $f$-values.
This is because a functional gain $\Delta := f(x) - f(x') > 0$ achieved
by moving from $x$ to $x'$ can be reduced to an arbitrarily small or
large gain $\varphi(f(x)) - \varphi(f(x'))$, where $\varphi$ is strictly
monotonically increasing, and the class of transformations does not
allow to bound the difference uniformly, neither additively nor
multiplicatively.
Instead, the following theorem establishes a progress or decrease
guarantee measured in terms of the spatial suboptimality function~$\f$.
It gets around the problem of inconclusive values in objective space
(which, in case of single-objective optimization, is just the real line)
by considering a quantity in \emph{search space}, namely the reference
measure of the sub-level set.

The algorithm is randomized, hence the decrease follows a distribution.
The following definition captures properties of this distribution.
\begin{definition} \label{definition:notations}
Let $P$ denote a probability distribution on $X$ with a bounded density
with respect to $\Lambda$ and let $f : X \to \R$ be a measurable
objective function.
The quantity
\begin{align*}
	u := \sup \left\{ \left. \frac{P(A)}{\Lambda(A)} \,\right|\, A \subset X \text{ measurable with } \Lambda(A) > 0 \right\}
\end{align*}
is an upper bound on the density.
Consider a sample $x \sim P$. Define the functions
\begin{align*}
	r^{<}    : \R \to [0, 1], \qquad & z \mapsto \Pr \big( f(x) <    z \big) = P\left(S_f^{<   }(z)\right) \\
	r^{\leq} : \R \to [0, 1], \qquad & z \mapsto \Pr \big( f(x) \leq z \big) = P\left(S_f^{\leq}(z)\right)
\end{align*}
of probabilities of strict and weak improvements.
Furthermore, we define
$s : [0, 1] \to \R$ as a measurable inverse function fulfilling
$r^{<}\big(s(q)\big) \leq q \leq r^{\leq}\big(s(q)\big)$ for all
$q \in [0, 1]$.
We collect the discontinuities of $r^{<}$ and $r^{\leq}$
in the set $Z := \big\{z \in \R \,\big|\, r^{<}(z) < r^{\leq}(z) \big\}$
and define the sum
\begin{align*}
	\zeta := \sum_{z \in Z} \Big( r^{\leq}(z) - r^{<}(z) \Big)^2
\end{align*}
of squared improvement jumps.
\end{definition}
Note that $u$, $r^{<}$, $r^{\leq}$, $s$, $Z$, and $\zeta$ implicitly
depend on $\Lambda$, $P$, and $f$. This is not indicated explicitly in
order to avoid excessive clutter in the notation.

If the function $f$ is continuous with continuous domain $X$ and without
plateaus, then $r^<$ and $r^\leq$ coincide, we have $\zeta = 0$, and $s$
maps each probability $q \in [0, 1]$ to the corresponding unique
quantile of the distribution of $f(x)$ under $P$. However, if there
exists a plateau within the support of $P$ (a level set of positive
$P$-measure, i.e., if $X$ is discrete), then $\zeta$ is positive and on
$Z$ the function $s$ takes values anywhere between the lower quantile
$P(f(x) < z)$ and the upper quantile $P(f(x) \leq z)$. The exact value
does not matter, since the only use of $s$-values is as arguments to one
of the $r$-functions. Indeed, $r^<(s(q))$ and $r^\leq(s(q))$ ``round''
the probability $q$ down or up, respectively, to the closest value that
is attainable as the probability of sampling a sub-level set. The
freedom in the choice of $s$ can also be understood in the context of
figure~\ref{figure:suboptimality1}: if the point $z$ in the definitions
of $r^{<}$ and $r^{\leq}$ is located on the plateau, then $s(q)$ can be
the anywhere between the probability mass of the sub-level set excluding
and including the plateau.

With these definitions in place, the following theorem controls the
expected value as well as the quantiles of the decrease distribution.

\begin{theorem} \label{theorem:decrease}
Let $P$ denote a probability distribution on $X$ with a bounded density
with respect to $\Lambda$ and let $f : X \to \R$ be a measurable
objective function. We use the notation of the above definition.
Fix a reference point $m \in X$ and let $p := r^{<} \big( f(m) \big)$
denote the probability of strict improvement of a sample $x \sim P$ over
$m$. Then for each $q \in [0, p]$, the $q$-quantile of the
$\f^{<}$-decrease is bounded from below by $\frac{p - r^{<}\big(s(q)\big)}{u}$
and the $q$-quantile of the $\f^{\leq}$-decrease is bounded by
$\frac{p - r^{\leq}\big(s(q)\big)}{u} + \Lambda \big( L_f(m) \big)$, i.e.,
\begin{align*}
	& \Pr \left( \f^{<}(m) - \f^{<}(x) \geq \frac{p - r^{<}\big(s(q)\big)}{u} \right) \geq q, \\
	& \Pr \left( \f^{\leq}(m) - \f^{\leq}(x) \geq \frac{p - r^{\leq}\big(s(q)\big)}{u} + \Lambda \big( L_f(m) \big) \right) \geq q.
\end{align*}
The expected $\f^{<}$-decrease is bounded from below by
\begin{align*}
	\Expectation \Big[ \max \big\{ 0, \f^{<}(m) - \f^{<}(x) \big\} \Big] \geq \frac{p^2 + \zeta}{2u},
\end{align*}
and the expected $\f^{\leq}$-decrease is bounded from below by
\begin{align*}
	\Expectation \Big[ \max \big\{ 0, \f^{\leq}(m) - \f^{\leq}(x) \big\} \Big] \geq \frac{p^2 + \zeta}{2u} + \Lambda \big( L_f(m) \big).
\end{align*}
\end{theorem}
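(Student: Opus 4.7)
The plan is to prove all four bounds through the same two-step approach: first establish the quantile bounds by exhibiting, for each $q \in [0, p]$, a measurable event of $P$-probability at least $q$ on which the claimed amount of decrease is guaranteed; then integrate the quantile bounds to obtain the expectation bounds.

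For the quantile step I would consider the event $E_q := \{x \in X : f(x) \leq s(q)\}$. By the defining property $r^{<}(s(q)) \leq q \leq r^{\leq}(s(q))$, we have $P(E_q) = r^{\leq}(s(q)) \geq q$. The monotonicity of the maps $y \mapsto S_f^{<}(y)$ and $y \mapsto S_f^{\leq}(y)$ yields, for every $x \in E_q$, the inclusions $S_f^{<}(f(x)) \subseteq S_f^{<}(s(q))$ and $S_f^{\leq}(f(x)) \subseteq S_f^{\leq}(s(q))$, hence the upper bounds $\f^{<}(x) \leq \Lambda(S_f^{<}(s(q)))$ and $\f^{\leq}(x) \leq \Lambda(S_f^{\leq}(s(q)))$ on the spatial suboptimalities. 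Subtracting and applying the pointwise density bound $\Lambda(A) \geq P(A)/u$ (which follows directly from the definition of $u$) to the resulting set differences immediately delivers the $\f^{<}$-quantile inequality, since $S_f^{<}(f(m)) \setminus S_f^{<}(s(q))$ has $P$-mass exactly $p - r^{<}(s(q))$. For the $\f^{\leq}$ version I would decompose the set difference $S_f^{\leq}(f(m)) \setminus S_f^{\leq}(s(q))$ as the disjoint union of $L_f(m)$ and $\{y : s(q) < f(y) < f(m)\}$; the second factor has $P$-mass $p - r^{\leq}(s(q))$ and therefore $\Lambda$-measure at least $(p - r^{\leq}(s(q)))/u$, while the first contributes the additive $\Lambda(L_f(m))$ term.

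For the expectation bounds I would integrate the quantile bounds against $dq$ via the identity $\Expectation[\max\{0, Y\}] = \int_0^1 Q_+(q)\,dq$, where $Q_+$ is the non-negative quantile function of the decrease distribution. After pulling out the factor $1/u$ (and, in the $\f^{\leq}$ case, the additive $\Lambda(L_f(m))$ contribution), the core quantity is $\int_0^p (p - r^{<}(s(q)))\,dq$ and its $r^{\leq}$ analogue. I would split this into a smooth part on which $r^{<}(s(q)) = q$ and which contributes $\int_0^p (p - q)\,dq = p^2/2$, together with a sum over the jump intervals $(r^{<}(z), r^{\leq}(z))$ for $z \in Z$ on which $s$ is forced to take the value $z$. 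A brief argument shows that if $s(q) \ne z$ for $q$ in the interior of such a jump interval then one of the defining inequalities on $s$ would be violated; thus $r^{<}(s(q))$ is constant and equal to $r^{<}(z)$ there, and elementary integration on each such interval produces exactly $(r^{\leq}(z) - r^{<}(z))^2/2$, summing to $\zeta/2$ and giving the $(p^2 + \zeta)/(2u)$ expression.

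The main obstacle will be the combinatorial bookkeeping around the jumps: identifying precisely which $z \in Z$ have their jump interval contained in the integration range $[0, p]$, and checking the boundary behaviour of $s$ and of the various set-difference decompositions when $s(q)$ coincides with $f(m)$. A secondary issue is the measurability of the event $E_q$ and of the random variables $\f^{<}(x)$, $\f^{\leq}(x)$, which follows from the assumed measurability of $f$, $s$, and the reference measure $\Lambda$, but should be recorded explicitly so that the layer-cake/quantile integration step is justified.
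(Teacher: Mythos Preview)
Your proposal is correct and follows essentially the same route as the paper. The event $E_q = \{f(x) \leq s(q)\}$ you use is precisely the set the paper calls $A \cup B = S_f^{\leq}(y_q)$, and your direct estimate $\Lambda\big(S_f^{<}(m)\setminus S_f^{<}(s(q))\big) \geq P(\,\cdot\,)/u = (p - r^{<}(s(q)))/u$ is a cleaner version of the paper's argument, which instead defines an auxiliary level $z_q^{<}$, bounds $\Lambda(\Delta_q^{<})$, and then backs out the inclusion $A\cup B \subset S_{\f^{<}}^{\leq}(z_q^{<})$; both arguments encode the same trade of $\Lambda$-mass for $P$-mass through~$u$. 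Your expectation step---writing $\Expectation[\max\{0,\cdot\}]$ as $\int_0^1 Q(q)\,dq$, bounding by $\int_0^p (p - r^{<}(s(q)))/u\,dq$, and splitting off the jump intervals where $s(q)=z$ to produce the $\zeta/2$ term---is exactly what the paper does, line for line. The bookkeeping concern you raise about which jump intervals lie inside $[0,p]$ is real and worth recording carefully, but it does not differ from what the paper needs.
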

\begin{proof}
We start with the first two claims, which provide lower bounds on the
$q$-quantiles of probabilities of improvement by some margin $\delta
\geq 0$. The argument here is elementary: an $\f$-improvement of
$\delta$ from $m$ to $x$
means that the $\f$-sub-level set of $x$
is smaller than that of $m$ by $\Lambda$-mass $\delta$ (due to the
offspring $x$ improving upon its parent $m$). This corresponds to a
difference in $P$-mass of the same $\f$-sub-level sets of at most
$u \cdot \delta$, which will correspond to~$q$ in the following. Note
that the probabilities ($\Pr(\dots)$-notation) correspond to the same
distribution $P$ from which $x$ is sampled, and that $\f$-values and
$s$-values directly correspond to $\Lambda$-mass. The situation is
illustrated in figure~\ref{figure:decrease}.

To make the above argument precise we fix $q$ and define the $f$-level
$$ y_q := \inf\Bigg(\Big\{y \in \R \,\Big|\, P(S_f^{\leq}(y)) \geq q \Big\}\Bigg). $$
For $q = 0$ the first two statements are trivial. For $q > 0$ the
infimum is attained and it thus holds $P\big(S_f^{\leq}(y_q)\big) \geq q$.
We define three disjoint sets: $A := S_f^{<}(y_q)$, $B := L_f(y_q)$, and
$C := S_f^{<}(m) \setminus S_f^{\leq}(y_q)$. The nested sub-level sets
$S_f^{<}(y_q) = A$, $S_f^{\leq}(y_q) = A \cup B$, and
$S_f^{<}(m) = A \cup B \cup C$ are unions of these sets.
By the definitions of $p$ and $q$ the probability of the set $C$ is
upper bounded by $P(C) = P(S_f^<(m)) - P(S_f^{\leq}(y_q)) \leq p - q$,
and the probability of $A \cup B$ is lower bounded by
$P(A \cup B) \geq q$.

We will show that the event of interest for the first claim, namely
$\f^{<}(m) - \f^{<}(x) \geq \frac{p - r^{<}(s(q))}{u}$,
implies $x \in S_f^{\leq}(y_q) = A \cup B$. To this end we define the
$\f^{<}$-level $z_q^{<} := \f^{<}(m) - \frac{p - r^{<}(s(q))}{u}$ and
the set
$\Delta_q^{<} := S_{\f^{<}}^{<}(m) \setminus S_{\f^{<}}^{\leq}(z_q^{<})$.
We have
\begin{align*}
	\Lambda(\Delta_q^{<}) &= \underbrace{\Lambda \left( S_{\f^{<}}^<(m) \right)}_{= \f^{<}(m)} - \underbrace{\Lambda \left( S_{\f^{<}}^{\leq}(z_q^{<}) \right)}_{\geq \f^{<}(m) - \frac{p - r^{<}(s(q))}{u}} \leq \frac{p - r^{<}(s(q))}{u},
\end{align*}
and hence $P(\Delta_q^{<}) \leq p - r^{<}(s(q))$ by the definition of
$u$. Together with Lemma~\ref{lemma:equivalence} this implies
$\Delta_q^{<} \subset B \cup C$, and hence
$A \subset S_{\f^{<}}^{\leq}(z_q^{<})$. However, due to the
definition of $S^\leq$ (in contrast to $S^{<}$), the sub-level set $A$
being a subset of $S_{\f^{<}}^{\leq}(z_q^{<})$ implies that also the
level set $B$ is contained in $S_{\f^{<}}^{\leq}(z_q^{<})$. This shows
the first claim.

For the second claim we define the $\f^{\leq}$-level
$z_q^{\leq} := \f^{\leq}(m) - \frac{p - r^{\leq}(s(q))}{u} - \Lambda\big(L_f(m)\big)$
and the set
$\Delta_q^{\leq} := S_{\f^{\leq}}^{<}(m) \setminus S_{\f^{\leq}}^{\leq}(z_q)$,
and we note that it holds $\f^{\leq}(m) - \Lambda\big(L_f(m)\big) = \f^{<}(m)$.
Then, with an analogous argument as above we obtain
$P(\Delta_q^{<}) \leq p - r^{\leq}(s(q))$. In this case we immediately
arrive at $\Delta_q^{\leq} \subset C$ and hence at
$A \cup B \subset S_{\f^{\leq}}^{\leq}(z_q^{\leq})$, which shows the
second claim.

\begin{figure}
\begin{center}
	\includegraphics{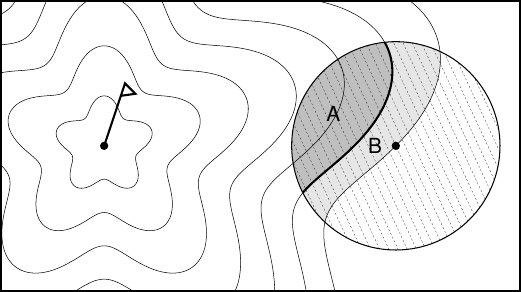}
	\caption{
		Illustration of the quantile decrease, here in the continuous
		case. The optimum is marked with a flag. In this example, the
		level lines of the objective function $f$ are star-shaped. The
		circle with the dashed shading on the right indicates the
		sampling distribution, which has ball-shaped support in this
		case. The probability of the area $A \cup B$ is the value
		$p = P(A \cup B)$, and $q = P(A)$ is the probability of the
		event of interest, corresponding to a significant improvement.
		The area $\Lambda(B)$ is a lower bound on the improvement in
		terms of $\f$. It is lower bounded by
		$\frac{P(B)}{u} = \frac{p - q}{u}$.
		The (bold) level line separating $A$ and $B$ belongs to $A$, and
		not to $B$. Therefore, if this set has positive measure, then we
		can only guarantee $q \leq P(A)$ (in contrast to equality), and
		the lower bound becomes $\frac{p - r^{<}(s(q))}{u} \leq \Lambda(B)$.
		\label{figure:decrease}
	}
\end{center}
\end{figure}

Let $Q$ denote the quantile function (the generalized inverse of the
cdf) of the $\f^{<}$-improvement
$\max \big\{ 0, \f^{<}(m) - \f^{<}(x) \big\}$. Then the expectation is
lower bounded by
\begin{align*}
	\Expectation \Big[ \max \big\{ 0, \f^{<}(m) - \f^{<}(x) \big\} \Big]
		=    & \int_0^1 Q(q) \,dq \\
		\geq & \int_0^p \frac{p - r^{<}\big(s(q)\big)}{u} \,dq \\
		=    & \int_0^p \frac{p - q}{u} \,dq + \int_0^p \frac{q - r^{<}\big(s(q)\big)}{u} \,dq \\
		=    & \int_0^p \frac{p - q}{u} \,dq + \sum_{z \in Z} \int_{r^{<}(z)}^{r^{\leq}(z)} \frac{q - r^{<}(z)}{u} \,dq \\
		=    & \frac{p^2}{2u} + \sum_{z \in Z} \frac{\big(r^{\leq}(z) - r^{<}(z)\big)^2}{2u} \\
		=    & \frac{p^2 + \zeta}{2u}.
\end{align*}
The proof of the expected $\f^{\leq}$ improvement is analogous. The
additional term $\Lambda \big( L_f(m) \big)$ again comes from
$\f^{\leq}(m) = \f^{<}(m) + \Lambda\big(L_f(m)\big)$.
%
\end{proof}
In a $(1+\lambda)$ evolutionary algorithm, the reference point $m$ is
usually the current best iterate (parent), and the distribution $P$ is
the search distribution, from which the offspring are sampled. Our main
application is the (1+1)-ES, where $x$ corresponds to the offspring
point sampled from a Gaussian centered on~$m$.

Due to the term $\Lambda\big(L_f(m)\big)$ in the decrease of
$\f^{\leq}$, the theorem covers the fitness-level method
\citep{droste2002analysis,wegener2003methods}. However, in particular
for search distributions spreading their probability mass over many
level sets, the theorem is considerably stronger.

In the continuous case, in the absence of plateaus, the statement can be
simplified considerably:
\begin{corollary} \label{corollary:plateau-free-decrease}
Under the assumptions and with the notation of
definition~\ref{definition:notations} and
theorem~\ref{theorem:decrease} we assume in addition that all
level sets of $f$ have measure zero. Then for each
$q \in [0, p]$, the $q$-quantile of the $\f$-decrease is
bounded from below by
\begin{align*}
	\Pr \left( \f(m) - \f(x) \geq \frac{p - q}{u} \right) \geq q,
\end{align*}
and the expected $\f$-decrease is bounded from below by
\begin{align*}
	\Expectation \Big[ \max \big\{ 0, \f(m) - \f(x) \big\} \Big] \geq \frac{p^2}{2u}.
\end{align*}
\end{corollary}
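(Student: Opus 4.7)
The plan is to derive the corollary as a direct specialization of Theorem~\ref{theorem:decrease} under the additional hypothesis that every level set $L_f(y)$ has $\Lambda$-measure zero. The entire content of the corollary lies in observing how each of the quantities $r^{<}$, $r^{\leq}$, $s$, $\zeta$, and $\Lambda(L_f(m))$ degenerates under this hypothesis.

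First I would note that because $P$ has a bounded density with respect to~$\Lambda$ (with bound~$u$), we have $P(L_f(y)) \leq u \cdot \Lambda(L_f(y)) = 0$ for every $y \in \R$. Hence $r^{<}(y) = P(S_f^{<}(y)) = P(S_f^{\leq}(y)) = r^{\leq}(y)$ for all $y$, so the discontinuity set $Z$ from Definition~\ref{definition:notations} is empty and $\zeta = 0$. Moreover $\f^{<}$ and $\f^{\leq}$ coincide, and we may simply write $\f$ and $r$.

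Next, the measurable inverse $s$ can be chosen so that $r(s(q)) = q$ holds for every $q \in [0, p]$: since $r^{<}$ and $r^{\leq}$ agree, the sandwich condition $r^{<}(s(q)) \leq q \leq r^{\leq}(s(q))$ collapses to an equality. With this in hand, the quantile bound $\frac{p - r^{<}(s(q))}{u}$ from Theorem~\ref{theorem:decrease} becomes $\frac{p-q}{u}$, and the additive correction $\Lambda(L_f(m))$ vanishes, yielding
\begin{align*}
    \Pr \left( \f(m) - \f(x) \geq \frac{p - q}{u} \right) \geq q.
\end{align*}
Finally, substituting $\zeta = 0$ and $\Lambda(L_f(m)) = 0$ into the expected-decrease bound of Theorem~\ref{theorem:decrease} gives $\frac{p^2}{2u}$, which is the second claim.

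There is no real obstacle here; the only point requiring a moment of care is justifying $r^{<} = r^{\leq}$, which uses the bounded-density assumption on $P$ to transfer the $\Lambda$-null property of level sets to a $P$-null property. Once that equality is established, the corollary follows by direct substitution into the statement of Theorem~\ref{theorem:decrease}.
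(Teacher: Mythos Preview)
Your proposal is correct and matches the paper's intended derivation: the paper states the corollary without proof, as an immediate specialization of Theorem~\ref{theorem:decrease} once level sets are null. One minor sharpening: you need not say that $s$ ``can be chosen'' so that $r(s(q))=q$; since $r^{<}=r^{\leq}$, the sandwich condition forces $r(s(q))=q$ for \emph{every} admissible~$s$.
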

The following corollary is a broken down version for Gaussian search
distributions $\Normal(m, C)$ with mean $m$ and covariance matrix~$C$,
which has the density
\begin{align*}
	\varphi(x) = \frac{1}{(2\pi)^{d/2} \sqrt{\det(C)}} \exp \left( -\frac{1}{2} (x-m)^T C^{-1} (x-m) \right).
\end{align*}
\begin{corollary} \label{corollary:gaussian-decrease}
Consider the search space $\R^d$ and the Lebesgue measure $\Lambda$.
Let $f : \R^d \to \R$ denote a measurable objective function with level
sets of measure zero. Consider a normally distributed sample
$x \sim \Normal(m, C)$. Under the assumptions and with the notation of
definition~\ref{definition:notations} and
theorem~\ref{theorem:decrease}, for each $q \in [0, p]$, the
$q$-quantile of the $\f$-decrease is bounded from below by
\begin{align*}
	\Pr \Big( \f(m) - \f(x) \geq (2\pi)^{d/2} \cdot \sqrt{\det(C)} \cdot (p - q) \Big) \geq q,
\end{align*}
and the expected $\f$-decrease is bounded from below by
\begin{align*}
	\Expectation\Big[ \max \big\{ 0, \f(m) - \f(x) \big\} \Big] \geq (2\pi)^{d/2} \cdot \sqrt{\det(C)} \cdot \frac{p^2}{2}.
\end{align*}
\end{corollary}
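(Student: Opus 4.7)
The plan is to obtain this corollary as a direct specialization of Corollary~\ref{corollary:plateau-free-decrease} to Gaussian search distributions, since all hypotheses of Corollary~\ref{corollary:plateau-free-decrease} are given: we work in $\R^d$ with the Lebesgue measure, $f$ is measurable with level sets of Lebesgue measure zero (so $\f^< = \f^\leq = \f$), and $x \sim \Normal(m,C)$ has a density with respect to~$\Lambda$. The only piece of data that separates the two statements is the concrete value of the density bound $u$ from Definition~\ref{definition:notations}, so the real task is to identify
\begin{align*}
	\frac{1}{u} \;=\; (2\pi)^{d/2} \sqrt{\det(C)}
\end{align*}
and then substitute this into the bounds from Corollary~\ref{corollary:plateau-free-decrease}.

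To compute $u$, I would argue that it equals the peak value of the Gaussian density $\varphi$, which is attained at the mean $m$ and equals $\varphi(m) = 1 / \big((2\pi)^{d/2}\sqrt{\det(C)}\big)$. The upper bound $P(A) \leq \varphi(m) \cdot \Lambda(A)$ for every measurable $A$ is immediate from $P(A) = \int_A \varphi\,d\Lambda$ and $\varphi \leq \varphi(m)$ pointwise, so $u \leq \varphi(m)$. For the matching lower bound, continuity of $\varphi$ at $m$ implies that for every $\varepsilon > 0$ there is a small open ball $B_\varepsilon$ around $m$ on which $\varphi \geq \varphi(m) - \varepsilon$, giving $P(B_\varepsilon)/\Lambda(B_\varepsilon) \geq \varphi(m) - \varepsilon$; letting $\varepsilon \to 0$ yields $u \geq \varphi(m)$.

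Substituting $1/u = (2\pi)^{d/2}\sqrt{\det(C)}$ into the two bounds of Corollary~\ref{corollary:plateau-free-decrease} gives exactly the two claimed inequalities: the quantile bound $\Pr\!\big(\f(m) - \f(x) \geq (p-q)/u\big) \geq q$ becomes $\Pr\!\big(\f(m) - \f(x) \geq (2\pi)^{d/2}\sqrt{\det(C)}\,(p-q)\big) \geq q$, and the expectation bound $p^2/(2u)$ becomes $(2\pi)^{d/2}\sqrt{\det(C)}\cdot p^2/2$. There is no real obstacle here beyond the sanity check that the supremum in the definition of $u$ really is attained (or approached) at the mode; everything else is a mechanical substitution.
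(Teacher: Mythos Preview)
Your proposal is correct and matches the paper's approach exactly: the paper does not spell out a proof for this corollary but presents it immediately after Corollary~\ref{corollary:plateau-free-decrease} and the explicit Gaussian density formula, making clear that it is obtained by plugging the maximal density value $u = \varphi(m) = 1/\big((2\pi)^{d/2}\sqrt{\det(C)}\big)$ into that corollary. Your justification that the supremum defining $u$ is actually attained at the mode is a nice extra bit of rigor the paper leaves implicit.
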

An isotropic distribution with component-wise standard deviation (step
size) $\sigma > 0$ has covariance matrix $C = \sigma^2 I$, where
$I \in \R^{d \times d}$ is the identity matrix; hence we have
$\sqrt{\det(C)} = \sigma^d$. In the context of continuous search spaces,
\cite{jaegerskuepper2003analysis} refers to $\f$-progress as
``spatial gain''. He analyzes in detail the gain distribution of an
isotropic search distribution on the sphere model. This result is much
less general than the previous corollary, since we can deal with
\emph{arbitrary} objective functions, which are characterized (locally)
only by a single number, the success probability. For the special case
of a Gaussian mutation and the sphere function, Jägersküpper's
computation of the spatial gain is more exact, since it is tightly
tailored to the geometry of the case, in contrast to being based on a
general bound. We lose only a multiplicative factor of the gain, which
does not impact our analysis significantly. However, it should be noted
that in the problem analyzed by Jägersküpper, the factor grows with the
problem dimension~$d$.
The spatial gain is closely connected to the notion of a progress rate
\citep{rechenberg:1973}, in particular if the gain is lower bounded by a
fixed fraction of the suboptimality. For a fixed objective function like
the sphere model $f(x) = \|x\|^2$ it is easy to relate functional
suboptimality $f(x) - f^*$ to spatial suboptimality $\f(x)$.

The above statements apply immediately to evolutionary algorithms with
(1+1) selection. Generalizing them to the best out of $\lambda$ samples
is rather straightforward (based on well-known statements on the
distribution of the minimum of $\lambda$ i.i.d.\ samples), resulting in
bounds for the one-step behavior of elitist algorithms with
$(1 + \lambda)$ selection. This is not done here because we are not
primarily interested in the scaling with $\lambda$, which is usually
non-essential for the question whether or not a $(1 + \lambda)$
algorithm converges to a (local) optimum.

\section{Success-bases Step Size Control in the (1+1)-ES}
\label{section:step-size-control}

In this section we discuss properties of the (1+1)-ES algorithm and
provide an analysis of its success-based step size adaptation rule that
will allow us to derive global convergence theorems. To this end we
introduce a non-standard regularity property.

From here on, we consider the search space $\R^d$, equipped with the
standard Borel $\sigma$-algebra, and $\Lambda$ denotes the Lebesgue
measure. Of course, all results from the previous section apply, with
$X = \R^d$.

In each iteration $t \in \N$, the state of the (1+1)-ES is given by
$(m^{(t)}, \sigma^{(t)}) \in \R^d \times \R^+$. It samples one candidate
offspring from the isotropic normal distribution
$x^{(t)} \sim \Normal(m^{(t)}, \big(\sigma^{(t)})^2 I\big)$. The parent
is replaced by successful offspring, meaning that the offspring must
perform at least as good as the parent.

The goal of success-based step size adaptation is to maintain a stable
distribution of the success rate, for example concentrated around $1/5$.
This can be achieved with a number of different mechanisms. Here we
consider the maybe simplest such mechanism, namely immediate adaptation
based on ``success'' or ``failure'' of each sample. Pseudocode for the
full algorithm is provided in Algorithm~\ref{algorithm:oneplusoneES}.

The constants $c_- < 0$ and $c_+ > 0$ in
Algorithm~\ref{algorithm:oneplusoneES} control the change of
$\log(\sigma)$ in case of failure and success, respectively. They are
parameters of the method. For $c_+ + 4 \cdot c_- = 0$ we obtain an
implementation of Rechenberg's classic $1/5$-rule
\citep{rechenberg:1973}. We call $\tau = \frac{c_-}{c_- - c_+}$ the
target success probability of the algorithm, which is always assumed to
be strictly less than $1/2$. This is equivalent to $c_+ > -c_-$. A
reasonable parameter setting is
$c_-, c_+ \in \Omega\left(\frac{1}{d}\right)$.

Two properties of the algorithm are central for our analysis: it is
rank-based and it performs elitist selection, ensuring that the
best-so-far solution is never lost and the sequence $f(m^{(t)})$ is
monotonically decreasing.

Since step-size control depends crucially on the concept of a fixed rate
of successful offspring, we define the success probability of the
algorithm, which is the probability of a sampled point outperforming the
parent in the search distribution center.
\begin{definition}
For a measurable function $f : \R^d \to \R$, we define the \emph{success
probability} functions
\begin{align*}
	p_f^{<} : \R^d \times \R^+ \to [0, 1] , \quad
		(m, \sigma) \mapsto & \Pr\Big(f(x) <    f(m) \,\Big|\, x \in \Normal(m, \sigma^2 I)\Big) \\
						= & \int_{S_f^{<   }(m)} \frac{1}{(2\pi)^{d/2}\sigma^d} \cdot \exp \left( -\frac{\|x - m\|^2}{2 \sigma^2} \right) \,dx, \\
	p_f^{\leq} : \R^d \times \R^+ \to [0, 1] , \quad
		(m, \sigma) \mapsto & \Pr\Big(f(x) \leq f(m) \,\Big|\, x \in \Normal(m, \sigma^2 I)\Big) \\
						= & \int_{S_f^{\leq}(m)} \frac{1}{(2\pi)^{d/2}\sigma^d} \cdot \exp \left( -\frac{\|x - m\|^2}{2 \sigma^2} \right) \,dx.
\end{align*}
\end{definition}
The function $p_f^{\leq}$ computes the probability of sampling a point at
least as good as $m$, while $p_f^{<}$ computes the probability of sampling
a strictly better point. If $p_f^{<}$ and $p_f^{\leq}$ coincide (i.e., if there
are no plateaus), then we write~$p_f$.
A nice property of the success probability is that it does not drop
too quickly when increasing the step size:
\begin{lemma} \label{lemma:increasing-step-size}
For all $m \in \R^d$, $\sigma > 0$ and $a \geq 1$ it holds
\begin{align*}
	p_f^{<}   (m, a \cdot \sigma) &\geq \frac{1}{a^d} \cdot p_f^{<}   (m, \sigma), \\
	p_f^{\leq}(m, a \cdot \sigma) &\geq \frac{1}{a^d} \cdot p_f^{\leq}(m, \sigma).
\end{align*}
\end{lemma}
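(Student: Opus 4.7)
The plan is to compare the two density integrals pointwise after factoring out the scale factor. Write
\begin{align*}
p_f^{<}(m, a\sigma) = \int_{S_f^{<}(m)} \frac{1}{(2\pi)^{d/2} (a\sigma)^d} \exp\!\left(-\frac{\|x-m\|^2}{2 (a\sigma)^2}\right) dx = \frac{1}{a^d} \int_{S_f^{<}(m)} \frac{1}{(2\pi)^{d/2} \sigma^d} \exp\!\left(-\frac{\|x-m\|^2}{2 a^2 \sigma^2}\right) dx,
\end{align*}
so the $1/a^d$ factor comes out of the normalization.

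Then I would observe that for $a \geq 1$ the exponent becomes less negative, i.e.\ $-\|x-m\|^2/(2a^2\sigma^2) \geq -\|x-m\|^2/(2\sigma^2)$ for every $x$, whence the integrand after factoring is pointwise at least the corresponding integrand in $p_f^{<}(m,\sigma)$. Integrating this pointwise inequality over $S_f^{<}(m)$ yields $p_f^{<}(m, a\sigma) \geq a^{-d} p_f^{<}(m,\sigma)$. The same argument, with $S_f^{<}(m)$ replaced by $S_f^{\leq}(m)$, gives the inequality for $p_f^{\leq}$.

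There is no real obstacle: the statement reduces to the elementary monotonicity of the Gaussian density in $\sigma$ at fixed $\|x-m\|$, after pulling the $\sigma^{-d}$ prefactor out of the integral. The only thing to be careful about is not to interchange anything illegal; since both integrands are nonnegative and measurable, the pointwise inequality transfers directly to the integrals by monotonicity of the Lebesgue integral, and the domain of integration is independent of $\sigma$ (it depends only on $f$ and $m$), which is what makes the scaling argument work cleanly.
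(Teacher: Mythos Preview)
Your proof is correct and follows essentially the same approach as the paper: factor out $a^{-d}$ from the normalizing constant, then use the pointwise inequality $\exp\big(-\|x-m\|^2/(2a^2\sigma^2)\big) \geq \exp\big(-\|x-m\|^2/(2\sigma^2)\big)$ for $a \geq 1$ to compare the integrals over the fixed domain $S_f^{<}(m)$ (resp.\ $S_f^{\leq}(m)$). Your added remarks on measurability and the $\sigma$-independence of the integration domain are fine but not needed beyond what the paper implicitly uses.
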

The proof is found in the appendix; this is the case for a number of
technical lemmas in this section.
The next step is to define a plausible range for the step size.
\begin{definition}
\label{definition:xi-eta}
For $p \in [0, 1]$ and a measurable function $f : \R^d \to \R$, we
define upper and lower bounds
\begin{align*}
	 \xi_p^f(m) := \inf & \Big\{ \sigma \in \R^+ \,\Big|\, p_f^{<}(m, \sigma) \leq p \Big\} \\
	\eta_p^f(m) := \sup & \Big\{ \sigma \in \R^+ \,\Big|\, p_f^{\leq}(m, \sigma) \geq p \Big\}
\end{align*}
on the step size guaranteeing lower and upper bounds on the probability
of improvement.
\end{definition}
We think of $\xi_p^f(m)$ with $p > \tau$ as a ``too small'' step size
at $m$. Similarly, for $p < \tau$, $\eta_p^f(m)$ is a ``too large''
step size at $m$. Assume that the two values of $p$ are chosen
so that a sufficiently wide range of ``well-adapted'' step sizes exists
in between the ``too small'' and ``too large'' ones.
We aim to establish that if the step size is outside this range then
step size adaptation will push it back into the range. The main
complication is that the range for $\sigma$ depends on the point~$m$.

The following lemma establishes a gap between lower and upper step size
bound, i.e., a lower bound on the size of the step size range.
\begin{lemma} \label{lemma:gap}
For $0 \leq p_H \leq p_T \leq 1$ it holds
$\sqrt[d]{p_H} \cdot \xi_{p_T}^f(x) \leq \sqrt[d]{p_T} \cdot \eta_{p_H}^f(x)$
for all $x \in \R^d$.
\end{lemma}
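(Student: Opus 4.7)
The crucial and easily missed observation is that $p_f^{<}(x, \sigma) \leq p_f^{\leq}(x, \sigma)$ for every $\sigma > 0$, owing to the trivial inclusion $S_f^{<}(x) \subset S_f^{\leq}(x)$. Together with the infimum and supremum characterizations of $\xi$ and $\eta$, this already lets me prove the \emph{stronger} inequality $\xi_{p_T}^f(x) \leq \eta_{p_H}^f(x)$, from which the lemma follows by multiplying both sides by $\sqrt[d]{p_H}$ and invoking $\sqrt[d]{p_H} \leq \sqrt[d]{p_T}$. Notably, the argument does not need to call on Lemma~\ref{lemma:increasing-step-size}; that lemma would actually provide the still sharper bound $\sqrt[d]{p_T} \cdot \xi_{p_T}^f(x) \leq \sqrt[d]{p_H} \cdot \eta_{p_H}^f(x)$ by rescaling a near-infimum argument of $\xi_{p_T}^f(x)$ by the factor $\sqrt[d]{p_T/p_H}$, but the stated inequality admits a purely set-theoretic proof.

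Before the main step I would dispose of the degenerate cases. If $p_H = 0$ or $\xi_{p_T}^f(x) = 0$, the left-hand side vanishes and the claim is immediate. If $\xi_{p_T}^f(x) = +\infty$, then $p_f^{<}(x, \sigma) > p_T \geq p_H$ for every $\sigma > 0$, so $p_f^{\leq}(x, \sigma) > p_H$ on all of $\R^+$, forcing $\eta_{p_H}^f(x) = +\infty$ as well, and both sides of the claim agree at infinity. So from here I may assume $p_H > 0$ and $\xi_{p_T}^f(x) \in (0, \infty)$.

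For the main step I would fix an arbitrary $\sigma \in (0, \xi_{p_T}^f(x))$. By the defining infimum property of $\xi_{p_T}^f(x)$, the point $\sigma$ cannot belong to $\{\tilde{\sigma} > 0 \mid p_f^{<}(x, \tilde{\sigma}) \leq p_T\}$, so $p_f^{<}(x, \sigma) > p_T$. Chaining gives $p_f^{\leq}(x, \sigma) \geq p_f^{<}(x, \sigma) > p_T \geq p_H$, which places $\sigma$ in $\{\tilde{\sigma} > 0 \mid p_f^{\leq}(x, \tilde{\sigma}) \geq p_H\}$, hence $\sigma \leq \eta_{p_H}^f(x)$. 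Letting $\sigma \nearrow \xi_{p_T}^f(x)$ concludes the argument. I do not foresee any substantial obstacle; the only care required is tidily organizing the degenerate cases, where the quantities may equal $0$ or $+\infty$.
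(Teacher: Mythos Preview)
Your proof is correct for the lemma as stated, and it is genuinely different from the paper's argument. You observe that $\xi_{p_T}^f(x) \leq \eta_{p_H}^f(x)$ follows from nothing more than the inclusion $S_f^{<}(x) \subset S_f^{\leq}(x)$ and the inf/sup definitions, and then pad by the trivial factor $\sqrt[d]{p_H}/\sqrt[d]{p_T} \leq 1$. The paper instead reproduces the density-scaling estimate of Lemma~\ref{lemma:increasing-step-size} inside the proof: starting from $p_f^{<}(x,\xi) = p_T$ it shows $p_f^{\leq}(x, a\xi) \geq p_T/a^d$ for $a \geq 1$, and taking $a = (p_T/p_H)^{1/d}$ yields the \emph{sharper} conclusion $\eta_{p_H}^f(x) \geq (p_T/p_H)^{1/d}\,\xi_{p_T}^f(x)$, i.e.\ $\sqrt[d]{p_T}\,\xi_{p_T}^f(x) \leq \sqrt[d]{p_H}\,\eta_{p_H}^f(x)$ --- exactly the bound you flagged as obtainable via Lemma~\ref{lemma:increasing-step-size}.

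The trade-off is worth noting. Your argument is cleaner and purely order-theoretic, but it only delivers $\eta_{p_H}^f/\xi_{p_T}^f \geq 1$. The paper's route gives the quantitative ratio $\eta_{p_H}^f/\xi_{p_T}^f \geq (p_T/p_H)^{1/d}$, and that is precisely what is invoked downstream: the proof of Lemma~\ref{lemma:sigma-in-range} uses ``according to Lemma~\ref{lemma:gap}'' together with $p_H/p_T \leq e^{d c_-}$ to conclude $\eta_{p_H}^f/\xi_{p_T}^f \geq e^{-c_-} > 1$, which your inequality (and indeed the lemma as literally stated) does not supply. So while you have proved the stated lemma, the paper's proof is effectively establishing the stronger version with the roles of $p_H$ and $p_T$ swapped on the coefficients, and that stronger version is what the subsequent analysis actually consumes.
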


The following definition is central. It captures the ability of the
(1+1)-ES to recover from a state with a far too small step size. This
property is needed to avoid premature convergence.
\begin{definition} \label{definition:regularity}
For $p > 0$, a function $f : \R^d \to \R$ is called $p$-improvable in
$x \in \R^d$ if $\xi_p^f(x)$ is positive. The function is called
$p$-improvable on $Y \subset \R^d$ if $\xi_p^f\big|_Y$ (the function
$\xi_p^f$ restricted to $Y$) is lower bounded by a positive, lower
semi-continuous function $\tilde \xi_p^f : Y \to (0, 1]$. A point
$x \in \R^d$ is called $p$-critical if it is not $p$-improvable for any
$p > 0$.
\end{definition}

\begin{figure}
\begin{center}
	\includegraphics{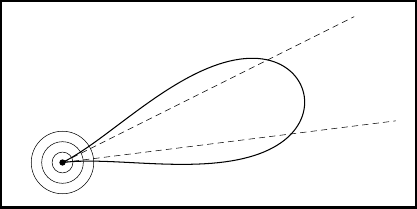}
	\caption{
		Illustration of a contour line with a kink opening up in an
		angle indicated by the dashed lines. The circles are iso-density
		lines on the isotropic Gaussian search distribution centered on
		the kink.
		\label{figure:angle}
	}
\end{center}
\end{figure}

The property of $p$-improvability is a non-standard regularity
condition. The concept applies to measurable functions, hence we do not
need to restrict ourselves to smooth or continuous objectives. On the
one hand side, the property excludes many measurable and even a few
smooth functions. On the other hand, it is far less restrictive than
continuity and smoothness, since it allows the objective function to
jump and the level sets to have kinks. Intuitively, in the
two-dimensional case illustrated in figure~\ref{figure:angle}, if for
each point the sub-level set opens up in an angle of more than $2 \pi
p$, then the function is $p$-improvable. This is the case for many
discontinuous functions, however, not for all smooth ones. The degree
three polynomial $f(x_1, x_2) = x_1^3 + x_2^2$ can serve as a counter
example, since the point $(0, 0)$ is $p$-critical, with its level set
forming a cuspidal cubic, see figure~\ref{figure:x3y2}
in section~\ref{section:cubic-saddle}.
Local optima are always $p$-critical, but many critical points of smooth
functions are not (see below). The above example demonstrates that some
saddle points share this property, however, if $x$ is $p$-critical but
not locally optimal then $p_f^{<}(x, \sigma) > 0$ for all $\sigma > 0$.
This means that such a point can be improved with positive probability
for each choice of the step size, but in the limit $\sigma \to 0$ the
probability of improvement tends to zero.

We should stress the difference between point-wise $p$-improvability,
which simply demands that $\xi_p^f$ is positive, and set-wise
$p$-improvability, which in addition demands that $\xi_p^f$ is lower
bounded by a lower semi-continuous positive function. The latter
property ensures the existence of a positive lower bound for $\xi_p^f$
on a compact set. In this sense, set-wise $p$-improvability is uniform
on compact sets. In sections \ref{section:jump} and
\ref{section:fat-cantor} we will see examples where this makes a
decisive difference.

Intuitively, the value of $p$ of a $p$-improvable function is critical:
if it is below $\tau$ then the algorithm may be endangered to
systematically decrease its step size while it should better do the
contrary.

The next lemma establishes that smooth functions are $p$-improvable in
all regular points, and also in most saddle points.
\begin{lemma} \label{lemma:differentiable-improvable}
Let $f : \R^d \to \R$ be continuously differentiable.
\begin{compactenum}
\item
	For a regular point $x \in \R^d$, $f$ is $p$-improvable in $x$ for
	all $p < \frac{1}{2}$.
\item
	Let $Y$ denote the set of all regular points of $f$, then $f$ is
	$p$-improvable on $Y$, for all $p < \frac{1}{2}$.
\item
	Let $x \in \R^d$ denote a critical point of $f$, let $f$ be twice
	continuously differentiable in a neighborhood of $x$, and let
	$H = \nabla^2 f(x)$ denote the Hessian matrix. If $H$ has at least
	one negative eigen value, then $x$ is not $p$-critical.
\end{compactenum}
\end{lemma}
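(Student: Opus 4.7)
The plan is to handle all three claims via Taylor expansion at the point in question, using the first-order expansion for parts (1)--(2) and the second-order expansion for (3).

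For claim (1), write $g := \nabla f(x) \neq 0$. The first-order expansion gives $f(x + \sigma z) - f(x) = \sigma \langle g, z\rangle + r(\sigma, z)$ with $r(\sigma, z)/\sigma \to 0$ as $\sigma \to 0$, uniformly for $z$ in bounded sets. Under $z \sim \Normal(0, I)$, the linear term $\langle g, z\rangle$ is a nondegenerate centered Gaussian, strictly negative with probability $\tfrac{1}{2}$ and bounded away from $0$ on a set of probability approaching $\tfrac{1}{2}$; truncating to $\|z\| \leq R$ and letting the remainder become negligible compared to $\sigma|\langle g, z\rangle|$ on this set yields $\liminf_{\sigma \to 0^+} p_f^{<}(x, \sigma) \geq \tfrac{1}{2}$. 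Hence for any $p < \tfrac{1}{2}$ some $\sigma_* > 0$ satisfies $p_f^{<}(x, \sigma_*) > p$, giving $\xi_p^f(x) \geq \sigma_* > 0$.

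For claim (2), the argument of (1) upgrades to a locally uniform version. Since $\nabla f$ is continuous and $Y$ is open, every $x_0 \in Y$ admits a compact neighborhood $K \subset Y$ on which $\|\nabla f\|$ has a positive uniform lower bound and $\nabla f$ is uniformly continuous, so the Taylor remainder becomes uniform in $x \in K$. Repeating the argument of (1) produces a single $\sigma_0 > 0$ with $p_f^{<}(x, \sigma) > p$ for every $x \in K$ and $\sigma \in (0, \sigma_0)$, i.e.\ $\xi_p^f \geq \sigma_0$ on $K$. Given this local lower boundedness,
\begin{align*}
	\tilde\xi_p^f(x) := \min\!\Big(1,\; \sup\big\{\sigma > 0 \,:\, \exists \text{ open } U \subset Y \text{ with } x \in U \text{ and } \xi_p^f \geq \sigma \text{ on } U\big\}\Big)
\end{align*}
is everywhere positive on $Y$, is bounded above by $\xi_p^f|_Y$, and is lower semi-continuous: if $\tilde\xi_p^f(x) > c$, the witnessing open $U$ certifies the same inequality at every point of $U$.

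For claim (3), let $H := \nabla^2 f(x)$ and fix a unit eigenvector $v$ of $H$ with eigenvalue $-\lambda < 0$. Since $\nabla f(x) = 0$ and $f$ is $\mathcal{C}^2$ near $x$, the second-order expansion reads $f(x + \sigma z) - f(x) = \tfrac{\sigma^2}{2} z^T H z + r(\sigma, z)$ with $r(\sigma, z)/\sigma^2 \to 0$ uniformly for $\|z\| \leq R$. The set $C := \{z : z^T H z \leq -\lambda \|z\|^2/2\}$ is an open cone containing $\pm v$, and for any $0 < r < R$ the shell $C' := C \cap \{r \leq \|z\| \leq R\}$ carries a positive Gaussian mass $\rho > 0$ and enforces $\tfrac{1}{2} z^T H z \leq -\lambda r^2/4$. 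For $\sigma$ below some $\sigma_0$ the remainder is dominated by this negative quadratic term uniformly on $C'$, so every $z \in C'$ gives $f(x + \sigma z) < f(x)$, hence $p_f^{<}(x, \sigma) \geq \rho$ for $\sigma \in (0, \sigma_0)$; thus $\xi_{\rho/2}^f(x) \geq \sigma_0 > 0$ and $x$ is not $p$-critical. The principal obstacle I anticipate is the lower semi-continuous lower bound demanded in (2); the explicit envelope above discharges this as soon as the uniform-on-compacta Taylor estimate has supplied local lower boundedness.
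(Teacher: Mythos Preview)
Your arguments for parts (1) and (3) follow the same lines as the paper: first-order Taylor for regular points to get $\lim_{\sigma\to 0} p_f^{<}(x,\sigma)=\tfrac12$, and at a critical point a cone around a negative-curvature eigenvector on which the quadratic term dominates the $o(\sigma^2)$ remainder, yielding a uniform positive success probability for small~$\sigma$.

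For part (2) you take a genuinely different route. The paper does \emph{not} construct a separate lower bound; instead it argues that $\xi_p^f$ is itself lower semi-continuous on $Y$. Concretely, it uses that continuity of $f$ makes $m\mapsto p_f^{<}(m,\sigma)$ lower semi-continuous, and then a sequential argument shows that for $a_t\to x$ one has $\liminf_t \xi_p^f(a_t)\geq \xi_p^f(x)$; since $\xi_p^f|_Y$ is already positive pointwise by part (1), it serves as its own witness $\tilde\xi_p^f$. Your approach instead upgrades the Taylor estimate of (1) to be uniform on small compact neighborhoods, obtaining a locally uniform lower bound $\xi_p^f\geq\sigma_0>0$, and then builds an explicit lower semi-continuous envelope by the open-set supremum formula. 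Both are correct. The paper's route is shorter and gives the slightly stronger conclusion that $\xi_p^f$ is l.s.c.\ outright; your route is more self-contained (it avoids the auxiliary claim that $p_f^{<}$ is l.s.c.\ in $m$) and makes the quantitative dependence on the local gradient bound transparent, at the cost of the extra envelope construction.
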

Similarly, we need to ensure that the step size does not diverge to
$\infty$. This is easy, since the spatial suboptimality is finite:
\begin{lemma} \label{lemma:upper-bound}
Consider the state $(m^{(t)}, \sigma^{(t)})$ of the (1+1)-ES.
For each $p \in (0, 1)$, if
\begin{align*}
	\sigma^{(t)} \geq \sqrt[d]{\frac{\f(m^{(t)})}{p \cdot (2\pi)^{d/2}}}
\end{align*}
then $p_f^{<}(m^{(t)}, \sigma^{(t)}) \leq p$.
\end{lemma}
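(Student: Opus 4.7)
The plan is to bound the success probability from above by replacing the Gaussian density with its peak value and multiplying by the Lebesgue measure of the sub-level set, then to invert the resulting inequality to read off the step-size threshold.

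Concretely, I would start from the integral representation
\begin{align*}
	p_f^{<}\big(m^{(t)}, \sigma^{(t)}\big)
	= \int_{S_f^{<}(m^{(t)})} \frac{1}{(2\pi)^{d/2} (\sigma^{(t)})^d} \cdot \exp\left(-\frac{\|x - m^{(t)}\|^2}{2 (\sigma^{(t)})^2}\right) dx
\end{align*}
and observe that the integrand is globally bounded by its maximum at $x = m^{(t)}$, namely $1/\big((2\pi)^{d/2} (\sigma^{(t)})^d\big)$. Pulling this constant out of the integral and using that $\Lambda\big(S_f^{<}(m^{(t)})\big) = \f^{<}(m^{(t)}) = \f(m^{(t)})$ under the standing assumption that level sets of $f$ are Lebesgue null sets, I obtain
\begin{align*}
	p_f^{<}\big(m^{(t)}, \sigma^{(t)}\big) \leq \frac{\f(m^{(t)})}{(2\pi)^{d/2} (\sigma^{(t)})^d}.
\end{align*}

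Finally I would substitute the hypothesis $(\sigma^{(t)})^d \geq \f(m^{(t)}) \big/ \big(p \cdot (2\pi)^{d/2}\big)$ into this bound, which immediately yields $p_f^{<}\big(m^{(t)}, \sigma^{(t)}\big) \leq p$.

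There is essentially no obstacle here: the argument is just the trivial density bound for an isotropic Gaussian, combined with the definition of $\f$ as the Lebesgue measure of the strict sub-level set. The only thing to be careful about is that the statement uses the unindexed $\f$, which is legitimate because the (1+1)-ES analysis assumes level sets of measure zero, so $\f^{<}$ and $\f^{\leq}$ coincide, and a single estimate handles both.
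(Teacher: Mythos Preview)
Your proposal is correct and follows essentially the same approach as the paper: bound the Gaussian density by its peak value $1/\big((2\pi)^{d/2}(\sigma^{(t)})^d\big)$, integrate over $S_f^{<}(m^{(t)})$ to obtain $\f(m^{(t)})/\big((2\pi)^{d/2}(\sigma^{(t)})^d\big)$, and then invoke the hypothesis on $\sigma^{(t)}$. The paper's proof is line-for-line the same computation.
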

In other words, a too large step size is very likely to produce
unsuccessful offspring. The probability of success decays quickly with
growing step size, since the step size bound grows slowly in the form
$\Theta(p^{-1/d})$ as the success probability $p$ decays to zero.
Applying the above inequality to $p < \tau$ implies that for large
enough step size $\sigma^{(t)}$, the expected change
$\Expectation[\log(\sigma^{(t+1)}) - \log(\sigma^{(t)})]$
in the (1+1)-ES (algorithm~\ref{algorithm:oneplusoneES})
is negative.

The following lemma is elementary. It is used multiple times in proofs,
with the interpretation of the event ``1'' meaning that a statement
holds true. It has a similar role as drift theorems in an analysis of
the expected or high-probability behavior
\citep{lehre2013general,lengler2016drift,akimoto2018drift}, however,
here we aim for almost sure results.
\begin{lemma} \label{lemma:subsequence}
Let $X^{(t)} \in \{0, 1\}$ denote a sequence of independent binary
random variables. If there exists a uniform lower bound
$\Pr(X^{(t)} = 1) \geq p > 0$, then almost surely there exists an
infinite sub-sequence $(t_k)_{k \in \N}$ so that $X^{(t_k)} = 1$ for all
$k \in \N$.
\end{lemma}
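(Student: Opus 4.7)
The plan is to invoke the second Borel–Cantelli lemma, since independence of the $X^{(t)}$ is assumed and the uniform bound $\Pr(X^{(t)}=1) \geq p > 0$ immediately gives divergence of the series $\sum_t \Pr(X^{(t)}=1) = \infty$. Borel–Cantelli then yields $\Pr\big(X^{(t)}=1 \text{ infinitely often}\big) = 1$, and enumerating the (almost surely infinite) random set $\{t \mid X^{(t)}=1\}$ in increasing order produces the desired sub-sequence $(t_k)_{k \in \N}$.

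If one prefers to avoid citing Borel–Cantelli and keep the argument self-contained, the alternative is a direct calculation. For each $N \in \N$, independence gives
\begin{align*}
\Pr\Big(X^{(t)} = 0 \text{ for all } t \geq N\Big) = \prod_{t=N}^{\infty} \Pr\big(X^{(t)} = 0\big) \leq \prod_{t=N}^{\infty} (1-p) = 0,
\end{align*}
because $1 - p < 1$. A countable union bound over $N \in \N$ then shows that the event ``$X^{(t)} = 1$ only finitely often'' has probability zero, so almost surely the set of indices with $X^{(t)} = 1$ is infinite, which again yields the sub-sequence.

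There is essentially no obstacle here; the only subtlety is making sure the independence hypothesis is actually used (it is, in the product step or in the second Borel–Cantelli lemma), and taking care that the constructed sub-sequence is genuinely measurable, which is automatic since one simply enumerates the measurable set $\{t \mid X^{(t)}=1\}$. I would write the proof in the direct form above, as it is two lines and avoids a citation.
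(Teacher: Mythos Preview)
Your proof is correct. The paper takes a genuinely different route: it argues by contradiction, assuming $\sum_{t} X^{(t)} < \infty$, and then applies Hoeffding's inequality to the partial sums to obtain $\Pr\big(\sum_{t=1}^n X^{(t)} \leq N\big) \leq \exp(-n p^2/2) \to 0$ for each fixed $N$, which forces the sum to be infinite almost surely. Your Borel--Cantelli argument (or the equivalent direct product computation) is more elementary and is the standard textbook proof of this fact; Hoeffding is overkill here because no quantitative concentration is needed, only the qualitative vanishing of the tail product $\prod_{t\geq N}(1-p)$. The paper's choice has the minor organizational advantage that Hoeffding is reused elsewhere (in the step-size control analysis), so no additional tool is introduced, but your version is shorter, more transparent, and what any reader would expect to see.
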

In applications of the lemma, the events of interest are not necessarily
independent, however, they can be ``made independent'' by considering a
sequence of independent events that imply the events of interest. In our
applications this is the case if the events of actual interest hold with
probability of at least $p$; then an i.i.d.\ sequence of Bernoulli
events implying corresponding sub-events with probability of exactly $p$
does the job. In other words, we will have a sequence $\tilde X^{(t)}$
of independent events, where $\tilde X^{(t)} = 1$ implies $X^{(t)} = 1$.
The above lemma is then applied to $\tilde X^{(t)}$, which trivially
yields the same statement for $X^{(t)}$. We imply this construction in
all applications of the lemma.

The following lemma establishes, under a number of technical conditions,
that the step size control rule succeeds in keeping the step size
stable. If the prerequisites are fulfilled then the result yields an
impossible fact, namely that the overall reduction of the spatial
suboptimality is unbounded. So the lemma is designed with proofs by
contradiction in mind.
\begin{lemma} \label{lemma:sigma-in-range}
Let $\big( m^{(t)}, \sigma^{(t)} \big)$ denote the sequence of states of
the (1+1)-ES on a measurable objective function $f : \R^d \to \R$. Let
$p_T, p_H \in (0, 1)$ denote probabilities
fulfilling $p_H < \tau < p_T$ and $\frac{p_H}{p_T} \leq e^{d \cdot c_-}$,
and assume the existence of constants $0 < b_T < b_H$ such that
\begin{align*}
	b_T \leq \xi_{p_T}^f(m^{(t)}) \quad \text{and} \quad e^{c_+} \cdot \eta_{p_H}^f(m^{(t)}) \leq b_H
\end{align*}
for all $t \in \N$.
Then, with full probability, there exists an infinite sub-sequence
$(t_k)_{k \in \N}$ of iterations fulfilling
\begin{align}
	\sigma^{(t_k)} \in \Big[ \xi_{p_T}^f \big( m^{(t_k)} \big), \eta_{p_H}^f \big( m^{(t_k)} \big) \Big]
	\label{eq:sigma-in-range}
\end{align}
for all $k \in \N$.
\end{lemma}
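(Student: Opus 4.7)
My plan is to argue by contradiction. Assume the conclusion fails on an event $A$ with $\Pr(A) > 0$: then on $A$ there is a random time $T_0$ past which $\sigma^{(t)}$ lies outside $[\xi_{p_T}^f(m^{(t)}), \eta_{p_H}^f(m^{(t)})]$ for every $t \geq T_0$. I will call such a step \emph{too-small} when $\sigma^{(t)} < \xi_{p_T}^f(m^{(t)})$ (whence $p_f^<(m^{(t)},\sigma^{(t)}) > p_T$) and \emph{too-large} when $\sigma^{(t)} > \eta_{p_H}^f(m^{(t)})$ (whence $p_f^\leq(m^{(t)},\sigma^{(t)}) < p_H$). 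The target is to extract an unbounded expected $\f$-decrease on $A$, which contradicts elitism ($\f(m^{(t)})$ non-increasing) together with $\f \geq 0$.

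First I collect uniform two-sided bounds on $\xi^{(t)}$ and $\eta^{(t)}$. Elitism yields $\f(m^{(t)}) \leq \f(m^{(0)}) < \infty$ for all $t$, so Lemma~\ref{lemma:upper-bound} applied with $p = p_T$ gives $\xi_{p_T}^f(m^{(t)}) \leq C := \sqrt[d]{\f(m^{(0)})/(p_T (2\pi)^{d/2})}$. Combining the hypothesis $\xi_{p_T}^f(m^{(t)}) \geq b_T$ with Lemma~\ref{lemma:gap} and the assumption $p_H/p_T \leq e^{d c_-}$, I obtain a uniform positive lower bound $\eta_{p_H}^f(m^{(t)}) \geq b_\eta$ for some $b_\eta > 0$. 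I next rule out trajectories eventually confined to a single regime. In too-small, the conditional log-step-size drift $\Expectation[\log\sigma^{(t+1)} - \log\sigma^{(t)} \mid \mathcal{F}_t] \geq c_+ p_T + c_-(1 - p_T)$ is strictly positive because $p_T > \tau$, while $\sigma^{(t)} \leq C$ is uniformly bounded above; a standard optional-stopping argument applied to the stopped process $\log\sigma^{(t \wedge \tau^*)}$, with $\tau^*$ the first exit time from too-small, forces $\tau^* < \infty$ almost surely. Symmetrically, in too-large the drift is strictly negative while $\sigma^{(t)} \geq b_\eta$ is bounded below, so the first exit from too-large is also a.s.\ finite. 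Consequently, on $A$ the trajectory must alternate between too-small and too-large infinitely often after $T_0$.

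To rule out this mixed regime I use the following key observation: a failure step from a too-small iterate preserves $m^{(t)}$ (and hence $\xi^{(t)}$) and shrinks $\sigma^{(t)}$, so the state stays too-small. Therefore every transition at step $t$ from too-small to too-large at step $t+1$ must occur on a successful step, and at such a step $\sigma^{(t+1)} = \sigma^{(t)} e^{c_+} > \eta_{p_H}^f(m^{(t+1)}) \geq b_\eta$ forces $\sigma^{(t)} > b_\eta e^{-c_+} =: \sigma_* > 0$. Corollary~\ref{corollary:gaussian-decrease} then lower-bounds the conditional expected $\f$-decrease at such a step by $\delta_* := (2\pi)^{d/2} \sigma_*^d p_T^2 / 2 > 0$. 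Since the total expected $\f$-decrease along the entire trajectory is at most $\f(m^{(0)})$, the expected number of too-small-to-too-large transitions is at most $\f(m^{(0)})/\delta_* < \infty$; yet on $A$ infinitely many such transitions occur by the alternation established in the previous paragraph, yielding the sought contradiction.

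The main obstacle is precisely this mixed regime: it is handled only because the gap hypothesis $p_H/p_T \leq e^{d c_-}$, together with Lemma~\ref{lemma:gap} and the uniform lower bound $\xi_{p_T}^f \geq b_T$, transfers into a uniform positive lower bound on $\eta^{(t)}$, which via the transition-through-success observation further propagates to the uniform positive lower bound $\sigma_*$ on the pre-step $\sigma^{(t)}$ at every too-small-to-too-large transition; without this chain the per-transition expected $\f$-decrease could vanish and the argument would collapse.
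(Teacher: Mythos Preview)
Your argument is correct and takes a genuinely different route from the paper. Both proofs first rule out that $\sigma^{(t)}$ stays permanently in one regime via a drift argument (the paper uses Hoeffding with an explicit i.i.d.\ minorization, you use optional stopping), and then attack the remaining scenario of infinite alternation between too-small and too-large. The paper exploits the \emph{other} no-overjump direction: the ratio condition $p_H/p_T \leq e^{d c_-}$ ensures, via Lemma~\ref{lemma:gap}, that the target interval has log-width at least $|c_-|$, so a single failure cannot carry $\sigma$ from above $\eta^{(t)}$ to below $\xi^{(t)}$; starting from a too-large state, one then waits (via Lemma~\ref{lemma:subsequence}) for a run of failures, during which $m$ and hence the target interval are frozen and $\sigma$ must land in it. This is a purely step-size-dynamical argument that never touches Corollary~\ref{corollary:gaussian-decrease}. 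You instead spend the $\f$-budget: every too-small$\to$too-large transition is a success, and the post-step condition $\sigma^{(t+1)} > \eta^{(t+1)} \geq b_\eta$ back-propagates to the $\mathcal F_t$-measurable condition $\sigma^{(t)} > \sigma_*$, which together with $p_f^{<} > p_T$ gives $\Expectation[\Delta_t \mid \mathcal F_t] \geq \delta_*$ by Corollary~\ref{corollary:gaussian-decrease}; this caps the expected number of such steps at $\f(m^{(0)})/\delta_*$. Your route is more direct and, notably, uses neither the hypothesis $e^{c_+}\eta^{(t)} \leq b_H$ (replaced by the bound $\xi^{(t)} \leq C$ from Lemma~\ref{lemma:upper-bound}) nor the ratio condition $p_H/p_T \leq e^{d c_-}$---only $p_H < \tau < p_T$ and $\xi^{(t)} \geq b_T$ are actually used.

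One point to tighten: the event ``step $t$ is a too-small$\to$too-large transition'' is not $\mathcal F_t$-measurable (it depends on where $m^{(t+1)}$ lands), so the phrase ``conditional expected $\f$-decrease at such a step'' should be read as $\Expectation[\Delta_t \mid \mathcal F_t]$ evaluated on the $\mathcal F_t$-measurable super-event $F_t := \{\text{too-small at }t,\ \sigma^{(t)} > \sigma_*\}$; then $\sum_t \Expectation[\Delta_t\,1_{F_t}] \geq \delta_* \sum_t \Pr(F_t)$ bounds $\Expectation[\#\{t:F_t\}] \leq \f(m^{(0)})/\delta_*$, which dominates the number of transitions since every transition step lies in $F_t$. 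Your text is consistent with this reading, but making it explicit removes any ambiguity.
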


Equation~\eqref{eq:sigma-in-range} is a rather weak condition demanding
that step-size adaptation works as desired. However, the requirement of
a uniform lower bound $b_T$ on the step size together with
theorem~\ref{theorem:decrease} implies that the (1+1)-ES
would make infinite $\f$-progress in expectation. This is of course
impossible if $\f(m^{(0)})$ is finite, since $\f$ is by definition
non-negative. Therefore the lemma does not describe a typical situation
observed when running the (1+1)-ES, but quite in contrast, an impossible
situation that need to be excluded in the proof of the main result in
the next section.

\section{Global Convergence}
\label{section:global-convergence}

In this section we establish our main result. The theorem ensures the
existence of a limit point of the sequence $m^{(t)}$ in a subset of
desirable locations. In many cases this amounts to convergence of the
algorithm to a (local) optimum.

\begin{theorem} \label{theorem:global-convergence}
Consider a measurable objective function $f : \R^d \to \R$ with level
sets of measure zero. Assume that
$K_0 := \closure{S_{f}^{\leq}\big(m^{(0)}\big)}$ is compact, and let
$K_1 \subset K_0$ denote a closed subset. If $f$ is $p$-improvable on
$K_0 \setminus K_1$ for some $p > \tau$,
then the sequence $\big(m^{(t)}\big)_{t \in \N}$ has a limit point
in~$K_1$.
\end{theorem}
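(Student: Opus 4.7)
The plan is to argue by contradiction: assume that the sequence $\big(m^{(t)}\big)_{t \in \N}$ has no limit point in $K_1$, and derive the impossible conclusion that the cumulative $\f$-decrease is infinite in expectation. By elitist selection every iterate lies in $K_0$, which is compact, so the set of limit points of $\big(m^{(t)}\big)$ is a nonempty closed (hence compact) subset of $K_0$. If it is disjoint from $K_1$, a standard covering argument using compactness of $K_1$ produces an open neighborhood $U \supset K_1$ and an index $T \in \N$ such that $m^{(t)} \in L := K_0 \setminus U$ for all $t \geq T$. The set $L$ is closed in $K_0$, hence compact, and contained in $K_0 \setminus K_1$.

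Next I would translate the hypotheses into the prerequisites of Lemma~\ref{lemma:sigma-in-range} for the shifted sequence starting at iteration $T$. Since $f$ is $p$-improvable on $K_0 \setminus K_1$, the lower semi-continuous bound $\tilde\xi_p^f$ attains a strictly positive infimum $b_T > 0$ on the compact set $L$, giving the uniform lower bound $\xi_p^f(m^{(t)}) \geq b_T$ for $t \geq T$. Setting $p_T := p > \tau$, I would choose $p_H \in (0, \tau)$ small enough so that $p_H / p_T \leq e^{d c_-}$; this is possible because $c_- < 0$. Lemma~\ref{lemma:upper-bound}, combined with the elitist bound $\f\big(m^{(t)}\big) \leq \f\big(m^{(0)}\big) < \infty$ (and the fact that level sets have measure zero so $p_f^{<} = p_f^{\leq}$), produces a uniform upper bound $b_H$ on $e^{c_+} \cdot \eta_{p_H}^f\big(m^{(t)}\big)$. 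Applying Lemma~\ref{lemma:sigma-in-range} to the trajectory from time $T$ onward yields, with full probability, an infinite sub-sequence $(t_k)_{k \in \N}$ at which $\sigma^{(t_k)} \in \big[\xi_{p_T}^f(m^{(t_k)}), \eta_{p_H}^f(m^{(t_k)})\big]$, and in particular $\sigma^{(t_k)} \geq b_T$ and the one-step success probability is at least $p_H$.

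To close the contradiction I would invoke Corollary~\ref{corollary:gaussian-decrease} at each such iteration: since the Gaussian density bound satisfies $1/u = (2\pi)^{d/2} (\sigma^{(t_k)})^d \geq (2\pi)^{d/2} b_T^d$, the conditional expected $\f$-decrease is at least
\begin{align*}
\delta := (2\pi)^{d/2} \cdot b_T^d \cdot \frac{p_H^2}{2} > 0.
\end{align*}
Writing $A_t$ for the event that $\sigma^{(t)}$ lies in the good range at iteration $t$, the tower property gives $\Expectation\big[\Delta_t \cdot \indicator_{A_t}\big] \geq \delta \cdot \Pr(A_t)$ where $\Delta_t := \max\{0, \f(m^{(t)}) - \f(m^{(t+1)})\}$, so that $\Expectation\big[\sum_{t \geq T} \Delta_t\big] \geq \delta \cdot \Expectation\big[\#\{t \geq T : A_t \text{ holds}\}\big] = \infty$ because this counting variable is almost surely infinite. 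On the other hand, elitism and non-negativity of $\f$ force $\sum_{t \geq T} \Delta_t \leq \f\big(m^{(0)}\big) < \infty$ pathwise, so its expectation is finite. This contradiction establishes the existence of a limit point in $K_1$.

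The main obstacle I expect is the step extracting a uniform lower bound $b_T$ on the compact tail set $L$: this relies on the precise definition of set-wise $p$-improvability, which requires the lower bound $\tilde\xi_p^f$ to be lower semi-continuous so that it attains its positive infimum on a compact set. The other delicate point is the measure-theoretic bookkeeping that converts the almost-sure infinitude of the sub-sequence $(t_k)$ into an infinite expected cumulative decrease, since the sub-sequence itself depends on the random trajectory; the indicator-and-tower argument above is the cleanest way to handle it.
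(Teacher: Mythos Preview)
Your overall strategy matches the paper's: reduce to a compact set $L \subset K_0 \setminus K_1$ on which the tail of the sequence stays, invoke Lemma~\ref{lemma:sigma-in-range} to produce infinitely many iterations with step size in $[\xi_{p_T}^f, \eta_{p_H}^f]$, and then use Corollary~\ref{corollary:gaussian-decrease} to force an infinite cumulative $\f$-decrease, contradicting $\f \geq 0$. Your compactness argument (limit set disjoint from $K_1$ implies eventual containment in $L$) is a slightly more direct variant of the paper's compact exhaustion $K(r)$, and your expectation/tower-property route to the contradiction is a clean alternative to the paper's almost-sure argument via the quantile bound of Corollary~\ref{corollary:gaussian-decrease} combined with the coupling construction and Lemma~\ref{lemma:subsequence}. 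Both variants buy the same conclusion; yours avoids the auxiliary independence construction.

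There is one genuine gap. You assert that $\sigma^{(t_k)} \in \big[\xi_{p_T}^f(m^{(t_k)}), \eta_{p_H}^f(m^{(t_k)})\big]$ yields ``the one-step success probability is at least $p_H$''. This does not follow from Definition~\ref{definition:xi-eta}: $\eta_{p_H}^f(m)$ is a supremum, and $p_f(m,\cdot)$ is not monotone in $\sigma$ in general, so the success probability can dip below $p_H$ for some $\sigma$ strictly inside the interval. The paper fills this by invoking Lemma~\ref{lemma:increasing-step-size}: from $p_f\big(m,\xi_{p_T}^f(m)\big) = p_T$ (continuity of $p_f$ in $\sigma$) together with $\xi_{p_T}^f(m^{(t_k)}) \geq b_T$ and $\sigma^{(t_k)} \leq \eta_{p_H}^f(m^{(t_k)}) \leq b_H$ one obtains the uniform lower bound
\[
p_f\big(m^{(t_k)}, \sigma^{(t_k)}\big) \geq \Big(\tfrac{b_T}{b_H}\Big)^{d} p_T =: p_I > 0 .
\]
Replace $p_H$ by $p_I$ in your definition of $\delta$ and the remainder of your argument goes through unchanged.
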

\begin{proof}
Lemma~\ref{lemma:upper-bound} ensures the existence of
$0 < p_H < e^{-d \cdot c_-} \cdot \tau$ and
\begin{align*}
	b_H := \sqrt[d]{\frac{\f \big( m^{(0)} \big)}{p_H \cdot (2\pi)^{d/2}}}
\end{align*}
such that it holds $\eta_{p_H}^f(x) \leq b_H$ uniformly for all
$x \in K_0$. In particular, $b_H$ is a uniform upper bound
on~$\eta_{p_H}^f$.

Let $B(x, r)$ denote the open ball of radius $r > 0$ around $x \in \R^d$
and define the compact set
\begin{align*}
	K(r) := K_0 \mathbin{\Big\backslash} \bigcup_{x \in K_1} B(x, r).
\end{align*}
It holds $K(r) \subset K_0 \setminus K_1$ and
$\bigcup_{r > 0} K(r) = K_0 \setminus K_1$; hence $K(r)$ is a compact
exhaustion of $K_0 \setminus K_1$.

Fix $r > 0$, and assume for the sake of contradiction that all points
$m^{(t)}$, $t > t_0$, are contained in $K(r)$. We set $p_T := p$. Let
$\tilde \xi_{p_T}^f$ denote the positive lower semi-continuous lower
bound on $\xi_{p_T}^f$, which is guaranteed to exist due to the
$p$-improvability of $f$. We define
\begin{align*}
	b_T := \min\Big\{\tilde \xi_{p_T}^f(m) \,\Big|\, m \in K(r)\Big\} > 0
\end{align*}
and apply lemma~\ref{lemma:sigma-in-range} to obtain an infinite
sub-sequence of states with step size lower bounded by
$\sigma^{(t)} \geq b_T > 0$. According to
lemma~\ref{lemma:increasing-step-size}, the success probability is lower
bounded by
$p_f\big(m^{(t)}, \sigma^{(t)}\big) \geq p_I := (b_T/b_H)^d \cdot p_T > 0$
for all $m \in K(r)$ and $\sigma \in [b_T, b_H]$.

Corollary~\ref{corollary:gaussian-decrease} ensures that in each such
state the probability to decrease the $\f$-value by at least
$(2\pi)^{d/2} \cdot b_T^d \cdot p_I/2$ is lower bounded by $p_I/2 > 0$.
We apply lemma~\ref{lemma:subsequence} with the following construction.
For each state $(m, \sigma)$ we pick a set $E(m, \sigma) \subset \R^d$
of probability mass $p_I/2$ improving on $\f(m)$ by at least
$(2\pi)^{d/2} \cdot b_T^d \cdot p_I/2$. Then we model the sampling
procedure of the (1+1)-ES in iteration $t$ as a two-stage process: first
we draw a binary variable $\tilde X^{(t)} \in \{0, 1\}$ with
$\Pr(\tilde X{(t)} = 1) = p_I/2$, and then we draw $x^{(t)}$ from a
Gaussian restricted to $E(m^{(t-1)}, \sigma^{(t-1)})$ if
$\tilde X{(t)} = 1$, and restricted to the complement otherwise. The
variables $\tilde X^{(t)}$ are independent, by construction.

Then lemma~\ref{lemma:subsequence} implies that the overall
$\f$-decrease is almost surely infinite, which contradicts the fact
that $\f(m^{(0)})$ is finite and $\f$ is lower bounded by zero. Hence,
the sequence $m^{(t)}$ leaves $K(r)$ after finitely many steps, almost
surely. For $r = 1/n$, let $t_n$ denote an iteration fulfilling
$m^{(t_n)} \not\in K(r)$. The sequence $\big(m^{(t_n)}\big)_{n \in \N}$
does not have a limit point in $K_0 \setminus K_1$ (since that point
would be contained in $K(r)$ for some $r > 0$), however, due to the
Bolzano-Weierstra{\ss} theorem it has at least one limit point in $K_0$,
which must therefore be located in~$K_1$.
\end{proof}

The above theorem is of primary interest if $K_1$ is the set of (local)
minima of $f$, or at least the set of critical or $p$-critical points.
Due to the prerequisites of the theorem we always have
\begin{align*}
	\closure{\Big\{ x \in K_0 \,\Big|\, x \text{ is $p$-critical} \Big\}} \subset K_1,
\end{align*}
i.e., $p$-critical points are candidate limit points.

In accordance with \cite{akimoto2010theoretical}, the following
corollary establishes convergence to a critical point for continuously
differentiable functions.
\begin{corollary} \label{corollary:differentiable}
Let $f : \R^d \to \R$ be a continuously differentiable function with
level sets of measure zero. Assume that
$K_0 = \closure{S_{f}^{\leq}\big(m^{(0)}\big)}$ is compact. Then the
sequence $\big(m^{(t)}\big)_{t \in \N}$ has a critical limit point.
\end{corollary}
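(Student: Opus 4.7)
The plan is to invoke Theorem~\ref{theorem:global-convergence} (global convergence) with $K_1$ chosen to be precisely the critical set of $f$ inside~$K_0$, namely
\[
K_1 := \big\{ x \in K_0 \,\big|\, \nabla f(x) = 0 \big\}.
\]
Everything then reduces to checking the two structural hypotheses of the theorem: that $K_1$ is closed and that $f$ is $p$-improvable on $K_0 \setminus K_1$ for some $p > \tau$.

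Closedness of $K_1$ is immediate from continuous differentiability: $K_1 = K_0 \cap (\nabla f)^{-1}(\{0\})$, i.e.\ a closed subset of the compact set $K_0$, since $\nabla f$ is continuous and $\{0\}$ is closed in $\R^d$. For the second hypothesis, I would choose any $p \in (\tau, 1/2)$, which is possible because the target success probability $\tau$ in the (1+1)-ES is assumed to be strictly less than $1/2$ (this is built into the step-size adaptation via $c_+ > -c_-$). The complement $K_0 \setminus K_1$ consists by construction of regular points of $f$, so it is contained in the set $Y$ of all regular points. Part~2 of Lemma~\ref{lemma:differentiable-improvable} then yields a positive, lower semi-continuous function $\tilde\xi_p^f : Y \to (0,1]$ lower-bounding $\xi_p^f$ on $Y$; restricting this function to the smaller set $K_0 \setminus K_1$ shows directly from Definition~\ref{definition:regularity} that $f$ is $p$-improvable on $K_0 \setminus K_1$.

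With both hypotheses in hand, Theorem~\ref{theorem:global-convergence} provides a limit point $m^* \in K_1$ of $\big(m^{(t)}\big)_{t \in \N}$; by the definition of $K_1$ this $m^*$ is a critical point of $f$, which is the claim of the corollary.

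No step here is really the obstacle — the corollary is essentially a packaging of the main theorem via the regularity lemma. The only subtlety worth flagging is that one must not confuse pointwise $p$-improvability with the set-wise version required by Theorem~\ref{theorem:global-convergence}; the reason the proof goes through smoothly is precisely that part~2 of Lemma~\ref{lemma:differentiable-improvable} supplies the set-wise version on all regular points, and the set-wise property is inherited by any subset (using the same $\tilde\xi_p^f$ as a lower bound).
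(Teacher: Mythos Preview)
Your proof is correct and follows essentially the same route as the paper: define $K_1$ as the critical set in $K_0$, invoke part~2 of Lemma~\ref{lemma:differentiable-improvable} to get $p$-improvability on $K_0 \setminus K_1$ for some $p \in (\tau, 1/2)$, and apply Theorem~\ref{theorem:global-convergence}. The paper's version is terser, but you spell out exactly the points it elides---closedness of $K_1$, the choice of $p > \tau$, and the inheritance of set-wise $p$-improvability by subsets---all correctly.
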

\begin{proof}
Define $K_1 := \{x \in K_0 \,|\, \nabla f(x) = 0 \}$ as the set of
critical points. This set is compact.
Lemma~\ref{lemma:differentiable-improvable} ensures that $f$ is
$p$-improvable on $K_0 \setminus K_1$ for all $p < 1/2$.
Then the claim follows immediately from
theorem~\ref{theorem:global-convergence}.
\end{proof}

Technically the above statements do not apply to problems with unbounded
sub-level sets. However, due to the fast decay of the tails of Gaussian
search distributions we can often approximate these problems by
changing the function ``very far away'' from the initial search
distribution, in order to make the sub-level sets bounded. We may then
even apply the theorem with empty $K_1$, which implies that after a
while the approximation becomes insufficient since the algorithm
diverges. In this sense we can conclude divergence, e.g., on a linear
function. We will use this argument several times in the next section,
mainly to avoid unnecessary technical complications when defining saddle
points and ridge functions.

We may ask whether $p$-improvability for $p > \tau$ is not only a
sufficient but also a necessary condition for global convergence. This
turns out to be wrong. The quadratic saddle point case discussed below
in section~\ref{section:quadratic-saddle} is a counter example, where
the algorithm diverges reliably even if the success probability is far
smaller than $\tau$. In contrast, the ridge of $p$-critical saddle
points analyzed in section~\ref{section:cubic-saddle} results in
premature convergence, despite the fact that the critical points form a
zero set, and this can even happen for a ridge of $p$-improvable points
with $p < \tau$, see section~\ref{section:linear-ridge}. Drift analysis
is a promising tool for handling all of these cases. Here we provide a
rather simple result, which still suffices for many interesting cases.
A related analysis for a non-elitist ES was carried out by
\cite{beyer2006self}.
\begin{theorem} \label{theorem:premature-convergence}
Consider a measurable objective function $f : \R^d \to \R$ with level
sets of measure zero. Let $m \in \R^d$ be a $p$-critical point. If the
success probability decays sufficiently quickly, i.e., if
\begin{align*}
	\sum_{k=0}^{\infty} p_f^{\leq}(m, e^{k \cdot c_-}) < \infty
\end{align*}
then for each given $p < 1$ there exists an initial condition such that
the (1+1)-ES converges to $m$ with probability of at least~$p$.
\end{theorem}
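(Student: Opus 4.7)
The plan is to exhibit a concrete initial condition under which, with high probability, the parent never moves from $m$; since this trivially implies convergence to $m$, the theorem follows at once. Specifically, I would take $m^{(0)} := m$ and choose the initial step size of the form $\sigma^{(0)} = e^{T \cdot c_-}$ for a large integer $T \in \N$ to be fixed at the end.

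The key observation is that, conditionally on no offspring having been accepted through iteration $t-1$, the state is completely deterministic: $m^{(t)} = m$ and $\sigma^{(t)} = e^{(T+t) \cdot c_-}$. Consequently the conditional probability that iteration $t$ accepts its offspring, given all previous iterations were unsuccessful, is exactly $p_f^{\leq}\!\big(m, e^{(T+t) c_-}\big)$, matching the algorithm's acceptance criterion $f(x^{(t)}) \leq f(m^{(t)})$ in Algorithm~\ref{algorithm:oneplusoneES}. (The summability hypothesis uses $p_f^{\leq}$, not $p_f^{<}$, which is the exactly right quantity for the $\leq$-acceptance rule; note also that the assumption of level sets of measure zero makes ties irrelevant anyway.)

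Next, writing $C_t$ for the event that iteration $t$ is the \emph{first} acceptance, the previous remark gives $\Pr(C_t) \leq p_f^{\leq}\!\big(m, e^{(T+t) c_-}\big)$, so by the union bound
\begin{align*}
\Pr\big(\text{the parent ever moves from } m\big) \;\leq\; \sum_{t=0}^{\infty} p_f^{\leq}\!\big(m, e^{(T+t) c_-}\big) \;=\; \sum_{k=T}^{\infty} p_f^{\leq}\!\big(m, e^{k c_-}\big).
\end{align*}
By the summability hypothesis the right-hand side is the tail of a convergent series and thus tends to $0$ as $T \to \infty$. For any given $p < 1$, choose $T$ large enough that this tail is at most $1 - p$; then with probability at least $p$ the sequence $\big(m^{(t)}\big)$ is identically $m$, which trivially converges to $m$.

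The only mildly delicate point is the conditional/unconditional bookkeeping in the union bound: one must bound $\Pr(C_t)$ using the \emph{conditional} success probability, so that the step size appearing in the bound is the deterministic value $e^{(T+t) c_-}$ rather than the random $\sigma^{(t)}$. Beyond this, the argument is a Borel–Cantelli-style estimate on the geometrically shrinking step size, and the $p$-criticality of $m$ plays no direct role in the proof: it only motivates the summability hypothesis, which is what the argument actually consumes.
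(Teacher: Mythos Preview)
Your proposal is correct and follows essentially the same approach as the paper: start at $m^{(0)}=m$ with $\sigma^{(0)}=e^{K_0 c_-}$ and bound the probability of ever accepting an offspring by the tail $\sum_{k\ge K_0} p_f^{\leq}(m,e^{kc_-})$, which can be made smaller than $1-p$. Your write-up is in fact more careful than the paper's, which states the tail bound without spelling out the union-bound/conditional-state reasoning you provide.
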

\begin{proof}
Define the zero sequence
$S_K := \sum_{k=K}^{\infty} p_f^{\leq}(m, e^{k \cdot c_-})$. For given
$p < 1$, there exists a $K_0$ such that $S_{K_0} < 1 - p$. By definition,
the probability of never sampling a successful offspring when starting
the algorithm in the initial state $m^{(0)} = m$,
$\sigma^{(0)} = e^{K_0 \cdot c_-}$ is given by $S_{K_0}$; in this case
we have $m^{(t)} = m$ for all $t \in \N$.
\end{proof}
The above theorem precludes global convergence to a (local) optimum with
full probability in the presence of a suitable non-optimal $p$-critical
point.

\section{Case Studies}
\label{section:case-studies}

In this section we analyze various example problems with very different
characteristics, by applying the above convergence analysis. We
characterize the optimization behavior of the (1+1)-ES, giving either
positive or negative results in terms of global convergence. We start
with smooth functions and then turn to less regular cases of non-smooth
and discontinuous functions. On the one hand side, we show that the
theorem is applicable to interesting and non-trivial cases; on the other
hand we explore its limits.

\subsection{The 2-D Rosenbrock Function}

The two-dimensional Rosenbrock function is given by
\begin{align*}
	f(x_1, x_2) := 100 (x_1^2 - x_2)^2 + (x_1 - 1)^2.
\end{align*}
This is a degree four polynomial. The function is unimodal (has a single
local minimum), but not convex. Moreover, it does not have critical
points other than the global optimum $x^* = (1, 1)$. The function is
illustrated in figure~\ref{figure:rosenbrock}.

\begin{figure}
\begin{center}
	\includegraphics{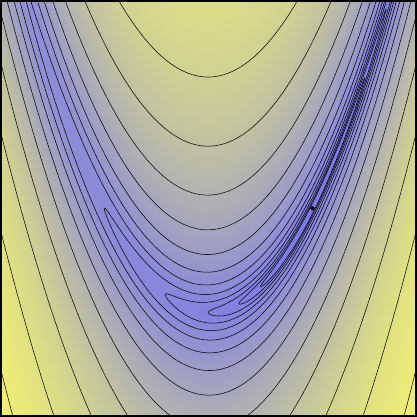}
	\caption{
		The 2-D Rosenbrock function in the range $[-2, 2] \times [-1, 3]$.
		\label{figure:rosenbrock}
	}
\end{center}
\end{figure}

The Rosenbrock function is a popular test problem since it requires a
diverse set of optimization behaviors: the algorithm must descend into a
parabolic valley, follow the valley while adapting to its curved shape,
and finally converge into the global optimum, which is a smooth optimum
with non-trivial (but still moderate) conditioning.

Corollary~\ref{corollary:differentiable} immediately implies convergence
of the (1+1)-ES into the global optimum. It does not say anything about
the speed of convergence, however, \cite{jaegerskuepper2006how}
established linear convergence in the last phase with overwhelming
probability (however, using a different step size adaptation rule).

Taken together, these results give a rather complete picture of the
optimization process: irrespective of the initial state we know that the
algorithm manages to locate the global optimum without getting stuck on
the way. Once the objective function starts to look quadratic in good
enough approximation, Jägersküpper's result indicates that
linear convergence can be expected. The same analysis applies to all
twice continuously differentiable unimodal function without critical
points other than the optimum.

\subsection{Saddle Points---The $p$-improvable Case}
\label{section:quadratic-saddle}

We consider the quadratic objective function
\begin{align*}
	f(x_1, x_2) := a \cdot x_1^2 - x_2^2
\end{align*}
with parameter $a > 0$. The origin is a saddle point. It is
$p$-improvable for all $p < 2 \cot^{-1}(\sqrt{a}) / \pi$ (see the
appendix for details). For small enough $a$, the success probability is
larger than $\tau$ and corollary~\ref{corollary:differentiable} applies,
while for large values of $a$ the success probability decays to zero and
we lose all guarantees.

Simulations show that the ES overcomes the zero level set containing the
saddle point without a problem, also for large values of $a$. It seems
that $p$-improvable saddle points do not result in premature convergence
of the algorithm, irrespective of the value of $p > 0$.
However, this statement is based on an empirical observation, not on a
rigorous proof.

\subsection{Saddle Points---The $p$-critical Case}
\label{section:cubic-saddle}

\begin{figure}
\begin{center}
	\includegraphics{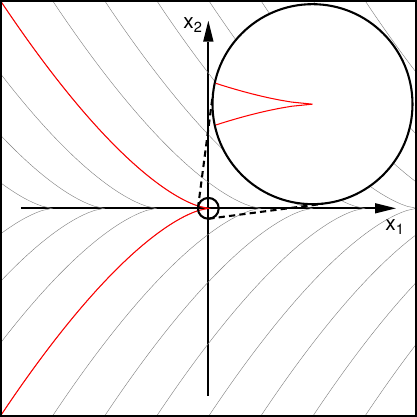}
	\caption{
		Level lines of the function $f(x_1, x_2) = x_1 + |x_2|^\frac{2}{3}$
		in the range $[-1, 1]^2$. The inset shows a zoom of factor $10$.
		\label{figure:x3y2}
	}
\end{center}
\end{figure}

The function
\begin{align*}
	f(x_1, x_2) = x_1 + |x_2|^\frac{2}{3}
\end{align*}
models a ridge shaped after the zero set of the cubic polynomial
$f(x_1, x_2) := x_1^3 + x_2^2$. It has $p$-critical saddle points on the
line $\R \times \{0\} \subset \R^2$ forming a ridge, see
figure~\ref{figure:x3y2}. Without loss of
generality we consider $m = 0 \in \R^2$ in the following. A successful
offspring $x \in \R^2$ fulfills $x_1^3 + x_2^2 \leq 0$. For small enough
$\sigma$ and hence for small enough $\|x\| \ll 1$,
$\|x\| \in \Theta(\sigma)$, this implies $-x_1 \gg |x_2|$ and hence
$-x_1 \in \Theta(\sigma)$ and $|x_2| \in o(\sigma)$. Plugging this into
the above inequality we obtain
$|x_2| \in \Order(-x_1 \cdot \sqrt{\sigma})$. Therefore, for small
$\sigma$ we have $p_f^{\leq}(0, \sigma) \in \Order(\sqrt{\sigma})$. This
implies that the cumulative success probability
\begin{align*}
	\sum_{t=0}^{\infty} p_f^{\leq}(0, e^{t \cdot c_-}) = \Order\left( \sum_{t=0}^\infty e^{t \cdot c_- / 2} \right) = \Order \left( \frac{1}{1 - e^{c_- / 2}} \right) = \Order(1)
\end{align*}
is finite, and theorem~\ref{theorem:premature-convergence} yields
(premature) convergence with arbitrarily high probability.

\subsection{Linear Ridge}
\label{section:linear-ridge}

Consider the linear ridge objective
\begin{align*}
	f(x_1, x_2) := x_1 + a \cdot |x_2|
\end{align*}
with parameter $a > 0$. The function is continuous, and its level sets
contain a kink. Again, the line $\R \times \{0\}$ is critical; this is
where the function is non-differentiable. The function is $p$-improvable
for $p < \cot^{-1}(a) / \pi < 1/2$ (see the appendix).
For $a \to \infty$ the success probability decays to zero.

As long as $\cot^{-1}(a) / \pi > \tau$ we can conclude divergence of the
algorithm (the intended behavior) from
theorem~\ref{theorem:global-convergence}. Otherwise we lose this
property, and it is well known and easy to check with simulations that
for large enough $a$ the algorithm indeed converges prematurely.

\subsection{Sphere with Jump}
\label{section:jump}

Our next example is an ``essentially discontinuous'' problem in the
sense that in general no function in the equivalence class $[f]$ is
continuous. We consider objective functions of the form
\begin{align*}
	f(x) := \|x\|^2 + \indicator_{S}(x),
\end{align*}
where $\indicator_S$ denotes the indicator function of a measurable
set~$S \subset \R^d$. If $S$ has a sufficiently simple shape then this
problem is similar to a constrained problem where $S$ is the infeasible
region \citep{arnold2008behaviour}, at least for small enough $\sigma$.
As long as $m^{(t)} \in S$ the (1+1)-ES essentially optimizes the sphere
function, and as soon as $m^{(t)} \not\in S$ the (soft) constraint comes
into play.

If $S$ is the complement of a star-shaped open neighborhood of the
origin then it is easy to see that the function is unimodal and
$p$-improvable for all $p < 1/2$.
Theorem~\ref{theorem:global-convergence} applied with $K_1 := \{0\}$
yields the existence of a sub-sequence converging to the origin, which
implies convergence of the whole sequence due to monotonicity of
$f \big( m^{(t)} \big)$. The results of \cite{jagerskupper2005rigorous}
and \cite{akimoto2018drift} imply linear convergence.

Other shapes of $S$ give different results. For example, for $d \geq 2$,
if $S$ is a ball not containing the origin then the function is still
unimodal. For example, define $S$ as the open ball of radius $1/2$
around the first unit vector $e_1 = (1, 0, \dots, 0) \in \R^d$. Then at
$m := 3/2 \cdot e_1$ we have $\xi_p^f(m) = 0$ for all $p > 0$, and
according to theorem~\ref{theorem:premature-convergence} the algorithm
can converge prematurely if the step size is small.
Alternatively, if $S$ is the closed ball, then all points except the
origin are $p$-improvable for all $p < 1/2$, however, there does not
exist a positive lower semi-continuous lower bound on $\xi_p^f$ in any
neighborhood of $m = 3/2 \cdot e_1$, and again the algorithm can
converge to this point, irrespective of the target success
probability~$\tau$.

Now consider the strip $S := (a, \infty) \times (0, 1) \subset \R^2$
with parameter $a > 0$. An elementary calculation of the success rate at
$m := (a + \varepsilon, 1)$ for $\sigma \to 0$ shows that the (1+1)-ES
is guaranteed to converge to the optimum irrespective of the initial
conditions if $\tan^{-1}(a) / (2 \pi) < \tau$ (details are found in the
appendix), i.e., if $a$ is large enough; otherwise the algorithm can
converge prematurely to a point on the edge $(a, \infty) \times \{1\}$
of~$S$.

\subsection{Extremely Rugged Barrier}
\label{section:fat-cantor}

Let us drive the above discontinuous problem to the extreme.
Consider the one-dimensional problem
\begin{align*}
	f(x) := x + \indicator_S(x),
\end{align*}
where $S \subset [-1, 0]$ is a Smith-Volterra-Cantor set, also known as
a fat Cantor set. $S$ is closed, has positive measure (usually chosen as
$\Lambda(S) = 1/2$), but is nowhere dense. Counter-intuitively, the
function is unimodal in the sense that no point is optimal restricted to
an open neighborhood (which is what commonly defines a local optimum).
Still, intuitively, $S$ should act as a barrier blocking optimization
progress with high probability.

The function is point-wise $p$-improvable everywhere. However, similar
to the closed ball case in the previous section, there is no positive,
lower semi-continuous lower bound on $\xi_p^f$. Therefore
theorem~\ref{theorem:global-convergence} does not apply.
Indeed, unsurprisingly, simulations%
\footnote{Special care must be taken when simulating this problem with
  floating point arithmetic. Our simulation is necessarily inexact,
  however, not beyond the usual limitations of floating point numbers.
  It does reflect the actual dynamics well. The fitness function is
  designed such that the most critical point for the simulation is zero,
  which is where standard IEEE floating point numbers have maximal
  precision.}
show that the algorithm gets stuck with positive probability when
initialized with $0 < x^{(0)} \ll 1$ and $\sigma \ll 1$.
When removing $0$ from $S$, then analogous to
section~\ref{section:cubic-saddle} we obtain
$p_f^{\leq}(m, \sigma) \in \Order(\sqrt{\sigma})$ for $m = 0$ and small
$\sigma$, and hence theorem~\ref{theorem:premature-convergence} applies.

In contrast, if $S$ is a Cantor set of measure zero then the algorithm
diverges successfully, since it ignores zero sets with full probability.

\section{Conclusions and Future Work}
\label{section:conclusion}

We have established global convergence of the (1+1)-ES for an extremely
wide range of problems. Importantly, with the exception of a few proof
details, the analysis captures the actual dynamics of the algorithm and
hence consolidates our understanding of its working principles.

Our analysis rests on two pillars. The first one is a progress guarantee
for rank-based evolutionary algorithms with elitist selection. In its
simplest form, it bounds the progress on problems without plateaus from
below. It seems to be quite generally applicable, e.g., to runtime
analysis and hence to the analysis of convergence speed.

The second ingredient is an analysis of success-based step size control.
The current method barely suffices to show global convergence. It is not
suitable for deducing stronger statements like linear convergence on
scale invariant problems. Control of the step size on general problems
therefore needs further work.

Many natural questions remain open, the most significant are listed in
the following. These open points are left for future work.
\begin{itemize}
\item
	The approach does not directly yield results on the speed of
	convergence. However, the progress guarantee of
	theorem~\ref{theorem:decrease} is a powerful tool for
	such an analysis. It can provide us with drift conditions and hence
	yield bounds on the expected runtime and on the tails of the runtime
	distribution. But for that to be effective we need better tools for
	bounding the tails of the step size distribution. Here, again, drift
	is a promising tool.
\item
	The current results are limited to step-size adaptive algorithms and
	do not include covariance matrix adaptation. One could hope to
	extend the proceeding to the (1+1)-CMA-ES
	algorithm~\citep{igel2007covariance}, or to (1+1)-xNES
	\citep{glasmachers:2010c}. Controlling the stability of the
	covariance matrix is expected to be challenging. It is not clear
	whether additional assumptions will be required. As an added
	benefit, it may be possible to relax the condition $p > \tau$ for
	$p$-improvability, by requiring it only after successful adaptation
	of the covariance matrix.
\item
	Plateaus are currently not handled.
	Theorem~\ref{theorem:decrease} shows how they distort
	the distribution of the decrease. Worse, they affect step size
	adaptation, and they make it virtually impossible to obtain a lower
	bound on the one-step probability of a strict improvement.
	Therefore, proper handling of plateaus requires additional arguments.
\item
	In the interest of generality, our convergence theorem only
	guarantees the existence of a limit point, not convergence of the
	sequence as a whole. We believe that convergence actually holds in
	most cases of interest (at least as long as there are no plateaus,
	see above). This is nearly trivial if the limit point is an isolated
	local optimum, however, it is unclear for a spatially extended
	optimum, e.g., a low-dimensional variety or a Cantor set.
\item
	Our current result requires a saddle point to be $p$-improvable for
	some $p > \tau$, otherwise the theorem does not exclude convergence
	of the ES to the saddle point. We know from simulations that the
	(1+1)-ES overcomes $p$-improvable saddle points reliably, also for
	$p \ll \tau$. A proper analysis guaranteeing this behavior would
	allow to establish statements analogous to work on gradient-based
	algorithms that overcome saddle points quickly and reliably, see
	e.g.\ \cite{dauphin2014identifying}. However, this is clearly beyond
	the scope of the present paper.
\item
	We provide only a minimal negative result stating that the algorithm
	may indeed converge prematurely with positive probability if there
	exists a $p$-critical point for which the cumulative success
	probability does not sum to infinity. In section~\ref{section:jump}
	it becomes apparent that this notion is rather weak, since the
	statement is not formally applicable to the case of a closed ball,
	which however differs from the open ball scenario only on a zero
	set. This makes clear that there is still a gap between positive
	results (global convergence) and negative results (premature
	convergence). Theorem~\ref{theorem:premature-convergence} can for
	sure be strengthened, but the exact conditions remain to be
	explored. A single $p$-improvable point with $p < \tau$ is
	apparently insufficient. A $p$-critical point may be sufficient, but
	it is not necessary.
\end{itemize}

\subsubsection*{Acknowledgments}

I would like to thank Anne Auger for helpful discussions,
and I gratefully acknowledge support by Dagstuhl seminar 17191
``Theory of Randomized Search Heuristics''.

{
\small
\bibliographystyle{apalike}

}

\appendix
\section*{Appendix}

Here we provide the proofs of technical lemmas that were omitted from
the main text in the interest of readability.

\vspace*{1em}\hrule\vspace*{1em}

\begin{proof}[Proof of lemma~\ref{lemma:equivalence}]
We have to show that the level sets of all three functions agree outside
a set of measure zero.
It is immediately clear from definition~\ref{definition:equivalence}
that the level sets of $f$ are a refinement of the level sets of
$\f^{\leq}$ and $\f^{<}$, i.e., $f(x) = f(x')$
implies $\f^{\leq}(x) = \f^{\leq}(x')$ and
$\f^{<}(x) = \f^{<}(x')$, and
$\f^{\leq}(x) < \f^{\leq}(x')$ and
$\f^{<}(x) < \f^{<}(x')$ both imply $f(x) < f(x')$.

It remains to be shown that $\f^{\leq}$ and $\f^{<}$
do not join $f$-level sets of positive measure.
Let $y \in \R$ denote a level so that $Y = \big(\f^{<}\big)^{-1}(y)$
has positive measure $\Lambda(Y) > 0$. We have to show that this measure
(not necessarily the whole set, only up to a zero set) is covered by a
single $f$-level set.
Assume the contrary, for the sake of contradiction. Then we find
ourselves in one of the following situations:
\begin{enumerate}
\item
	There exist $x, x' \in Y$ fulfilling $a := f(x) < f(x') =: a'$ and
	it holds $\Lambda\big(f^{-1}(a)\big) > 0$ and
	$\Lambda\big(f^{-1}(a')\big) > 0$. So the mass of $Y$ is split
	into at least two chunks of positive measure. This implies
	$\f^{<}(x') - \f^{<}(x) \geq \Lambda\big(f^{-1}(a)\big) > 0$,
	which contradicts the assumption that $x$ and $x'$ belong to the same
	$\f^{<}$-level.
\item
	There exist $x, x' \in Y$ fulfilling $a = f(x) < f(x') = a'$ and it
	holds $\Lambda\big(f^{-1}(I)\big) > 0$ for the open interval
	$I = (a, a')$. So $Y$ consists of a continuum of level sets of
	measure zero. Again, this implies
	$\f^{<}(x') - \f^{<}(x) \geq \Lambda\big(f^{-1}(I)\big) > 0$,
	leading to the same contradiction as in the first case.
\end{enumerate}
The argument for $\f^{\leq}$ is exactly analogous.
\end{proof}

\vspace*{1em}\hrule\vspace*{1em}

\begin{proof}[Proof of lemma~\ref{lemma:increasing-step-size}]
It holds
\begin{align*}
	p_f^{<}(m, a \cdot \sigma)
		&= \int_{S_f^{<}(m)} \frac{1}{(2\pi)^{d/2} a^d \sigma^d} \cdot \exp \left( -\frac{\|x - m\|^2}{2 a^2 \sigma^2} \right) \,dx \\
		&\geq \frac{1}{a^d} \cdot \int_{S_f^{<}(m)} \frac{1}{(2\pi)^{d/2} \sigma^d} \cdot \exp \left( -\frac{\|x - m\|^2}{2 \sigma^2} \right) \,dx \\
		&= \frac{1}{a^d} \cdot p_f^{<}(m, \sigma).
\end{align*}
The computation for $p_f^{\leq}$ is analogous.
\end{proof}

\vspace*{1em}\hrule\vspace*{1em}

\begin{proof}[Proof of lemma~\ref{lemma:gap}]
Fix $x$ and define $\xi := \xi_{p_T}^f(x)$.
The cases $p_H = 0$ and $\xi = 0$ are trivial, so in the following we
treat the case that both are positive. For $a \geq 1$ it holds
\begin{align*}
	p_T &= \phantom{a^d \cdot} \int_{S_f^{<}(x)} \frac{1}{(2\pi)^{d/2} \xi^d} \exp\left( -\frac{\|x' - x\|^2}{2 \xi^2} \right) \,dx' \\
	       &=    a^d \cdot \int_{S_f^{<}(x)} \frac{1}{(2\pi)^{d/2} a^d \xi^d} \exp\left( -\frac{\|x' - x\|^2}{2 \xi^2} \right) \,dx' \\
	       &\leq a^d \cdot \int_{S_f^{<}(x)} \frac{1}{(2\pi)^{d/2} a^d \xi^d} \exp\left( -\frac{\|x' - x\|^2}{2 a^2 \xi^2} \right) \,dx'.
\end{align*}
In other words, the success probability for step size $a \cdot \xi$
is at least $p_T / a^d$. Hence, in order to push the success
probability below $p_T / a^d$, the step size must be at least
$\xi \cdot a$, which therefore bounds $\eta_{p_T / a^d}^f(x)$ from
below. Applying the above argument with
$a = \sqrt[d]{p_T / p_H}$ completes the proof.
\end{proof}

\vspace*{1em}\hrule\vspace*{1em}

\begin{proof}[Proof of lemma~\ref{lemma:differentiable-improvable}]
In a small enough neighborhood of a regular point $x$ the function $f$
can be approximated arbitrarily well by a linear function (its first
order Taylor polynomial). In particular, the level set of $f$ is
arbitrarily well approximated by a hyperplane, for which the
probability of strict improvement is exactly $1/2$. Hence we have
\begin{align*}
	\lim_{\sigma \to 0} p_f^{<}(x, \sigma) = \frac{1}{2},
\end{align*}
which immediately implies the first statement.

We have already seen that the second statement holds point-wise. It
remains to be shown that $\xi_p^f|_Y$ is lower bounded by a positive,
lower semi-continuous function. To this end we show that $\xi_p^f$
itself is lower-semi-continuous, and we note that $\xi_p^f|_Y$ takes
positive values.
Consider a convergent sequence $(a_t)_{t \in \N} \to x \in \R^d$ and
define $\xi_a := \lim\inf_{t \to \infty} \xi_p^f(a_t)$ and
$\xi_x := \xi_p^f(x)$. We have to show that it holds $\xi_x \leq \xi_a$
for all choices of $x$ and $(a_t)_{t \in \N}$. We define
\begin{align*}
	S_x :=\, & \Big\{ \sigma \in \R^+ \,\Big|\, p^<_f(x, \sigma) \leq p \Big\} \\
	\text{and} \qquad S_a :=\, & \Big\{ \sigma \in \R^+ \,\Big|\, \exists (t_k)_{k \in \N} :\, p^<_f(a_{t_k}, \sigma) \leq p \,\forall k \in \N \Big\},
\end{align*}
which allows us to write $\xi_a = \inf(S_a)$ and $\xi_x = \inf(S_x)$.
Fix $\sigma \in S_a$ and a corresponding sub-sequence $(t_k)_{k \in \N}$
so that it holds $p^<_f(a_{t_k}, \sigma) \leq p \,\forall k \in \N$.
From the continuity of $f$ it follows that the success probability
function $p_f^{<}$ is lower semi-continuous (and even continuous in its
second argument, the step size). From $\lim_{k \to \infty} a_{t_k} = x$
and lower semi-continuity of $p^<_f$ it follows $\sigma \in S_x$. We
conclude $S_a \subset S_x$ and therefore $\xi_x \leq \xi_a$.


To show the last statement we construct a cone of improving steps
centered at $x$. This cone makes up a fixed fraction of each ball
centered on $x$, which shows that $x$ is $p$-improvable, where $p$ is
any number smaller than the volume of the intersection of ball and cone
divided by the volume of the ball, which is well-defined and positive in
the limit when the radius tends to zero.
Let $v$ denote an eigen vector of $H$ fulfilling $v^T H v < 0$. For
$\sigma \to 0$, the objective function is well approximated by the
quadratic Taylor expansion
\begin{align*}
	f(x') \approx g(x') = f(x) + (x - x')^T H (x - x').
\end{align*}
The sub-level set $S_f^{<}(x)$ is locally well approximated by
$S_g^{<}(x)$, which is a cone centered on $x$. Whether a ray
$x + \R \cdot z$ belongs to $S_g^{<}(x)$ or not depends on whether
$z^T H z < 0$ or not. Now, the eigen vector $v$ has this property, and
due to continuity of $g$, the same holds for an open neighborhood $N$ of
$v$. The cone $x + \R \cdot N$ is contained in $S_g^{<}(x)$ and has the
same positive probability $s_g^{<}(x, \sigma) = p > 0$ under
$\Normal(x, \sigma^2 I)$ for all $\sigma > 0$. We conclude
\begin{align*}
	\lim_{\sigma \to 0} p_f^{<}(x, \sigma) \geq p > 0,
\end{align*}
which completes the proof.
\end{proof}

\vspace*{1em}\hrule\vspace*{1em}

\begin{proof}[Proof of lemma~\ref{lemma:upper-bound}]
We use the short notation $m := m^{(t)}$ and $\sigma = \sigma^{(t)}$.
Let $S = S_f^{<}(m)$ denote the region of improvement, with Lebesgue
measure $\f(m)$. The probability of sampling from this region is bounded
by
\begin{align*}
	p_f^{<}(m)
			&= \int_S \frac{1}{(2\pi)^{d/2} \sigma^d} \exp \left( -\frac{\|x-m\|^2}{2 \sigma^2} \right) \, \text{d}x \\
			&= \frac{1}{(2\pi)^{d/2} \sigma^d} \int_S \exp \left( -\frac{\|x-m\|^2}{2 \sigma^2} \right) \, \text{d}x \\
			&< \frac{1}{(2\pi)^{d/2} \sigma^d} \int_S \text{d}x \\
			&= \frac{\f(m)}{(2\pi)^{d/2} \sigma^d} \\
			&\leq p,
\end{align*}
where the last inequality is equivalent to the assumption.
\end{proof}

\vspace*{1em}\hrule\vspace*{1em}

\begin{proof}[Proof of lemma~\ref{lemma:subsequence}]
Assume the contrary, for the sake of contradiction. Then
\begin{align*}
	\sum_{t=1}^\infty X^{(t)} < \infty.
\end{align*}
Fix $N \in \N$. Hoeffding's inequality applied with $\varepsilon = p/2$
and $n \geq \frac{2N}{p}$ yields
\begin{align*}
	\Pr \left( \sum_{t=1}^n X^{(t)} \leq N \right) \leq \exp \left( -n \cdot \frac{p^2}{2} \right) \underset{n \to \infty}{\longrightarrow} 0.
\end{align*}
Hence, for $n \to \infty$, with full probability the infinite sum
exceeds $N$. Since $N$ was arbitrary, we arrive at a contradiction.
\end{proof}

\vspace*{1em}\hrule\vspace*{1em}

\begin{proof}[Proof of lemma~\ref{lemma:sigma-in-range}]
In each iteration, the step size $\sigma$ is multiplied by either
$e^{c_-}$ or $e^{c_+}$. According to Lemma~\ref{lemma:gap},
the condition $\frac{p_H}{p_T} \leq e^{d \cdot c_-}$ yields
\begin{align*}
	\frac{\eta_{p_H}^f \big( m^{(t_k)} \big)}{\xi_{p_T}^f \big( m^{(t_k)} \big)} \geq e^{-c_-}.
\end{align*}
An unsuccessful step of the (1+1)-ES in iteration $t$ results in a
reduction of the step size by the factor
$\frac{\sigma^{(t+1)}}{\sigma^{(t)}} = e^{c_-} < 1$ and leaves
$m^{(t+1)} = m^{(t)}$ unchanged. We conclude that in no such step can
overjump the interval
$\big[ \xi_{p_T}^f \big( m^{(t)} \big), \eta_{p_H}^f \big( m^{(t)} \big) \big]$,
in the sense of $\sigma^{(t)} \geq \eta_{p_H}^f \big( m^{(t)} \big)$ and
$\sigma^{(t+1)} \leq \xi_{p_T}^f \big( m^{(t)} \big)$.
The above property also implies $\frac{b_H}{b_T} \geq e^{-c_-}$.

The central proof argument works as follows. First we exclude that the
step size remains outside $[b_T, b_H]$ for too long. The same argument
does not work for the target interval defined in
equation~\eqref{eq:sigma-in-range} because of its time dependency---we
could overjump the moving target. Instead we show that the only way for
the step size to avoid the target interval for an infinite time is to
overjump, i.e., to find itself above and below the interval infinitely
often. Finally, an argument exploiting the properties of unsuccessful
steps allows us to consider a static target, which cannot be overjumped
by the property already shown above.

First we show that there exists an infinite sub-sequence of iterations
$t$ fulfilling $\sigma^{(t)} \in [b_T, b_H]$. This statement is strictly
weaker than the assertion to be shown. It is still helpful in the
following because then we know that the step sizes returns to a fixed,
$t$-independent interval for an infinite number of times.
Assume for the sake of contradiction that there exists $t_0$ such that
$\sigma^{(t)} \leq b_T$ for all $t \geq t_0$. The logarithmic step size
change $\delta^{(t)} := \log(\sigma^{(t+1)}) - \log(\sigma^{(t)})$ takes
the values $c_+ > 0$ with probability at least $p_T > \tau$ and
$c_- < 0$ with probability at most $1 - p_T < 1 - \tau$, hence
\begin{align*}
	\Expectation\big[\delta^{(t)}\big] \geq \Delta := p_T \cdot c_+ + (1 - p_T) \cdot c_- > 0.
\end{align*}
For $t_1 > t_0$ we consider the random variable
$\log(\sigma^{(t_1)}) = \log(\sigma^{(t_0)}) + \sum_{t=t_0}^{t_1-1} \delta^{(t)}$.
The variables $\delta^{(t)}$ are not independent. We create independent
variables as follows. For each candidate state $(m, \sigma)$ fulfilling
$\sigma < b_T$ we fix a set $I(m, \sigma) \subset S_f^{<}(m)$ of improving
steps with probability mass exactly $p_T$ under the distribution
$\Normal(m, \sigma^2 I)$. Let $\tilde\delta^{(t)}$ denote the step size
change corresponding to $\delta^{(t)}$ for which the step size is
increased only if the iterate $m^{(t+1)}$ is contained in $I(m,
\sigma)$. Note that these hypothetical step size changes do not
influence the actual sequence of algorithm states. Therefore the
sequence is i.i.d., and it holds $\tilde\delta^{(t)} \leq \delta^{(t)}$.
From Hoeffding's inequality applied with $\varepsilon = \Delta/2$ to
$\sum_{t = t_0}^{t_1 - 1} \tilde\delta^{(t)} \leq \sum_{t = t_0}^{t_1 - 1} \delta^{(t)}$
we obtain
\begin{align*}
	\Pr & \left\{ \log(\sigma^{(t_1)}) \leq \log(\sigma^{(t_0)}) + (t_1 - t_0) \cdot \frac{\Delta}{2} \right\} \\
		& \leq \exp \left( - (t_1 - t_0) \cdot \frac{\Delta^2}{2 (c_+ - c_-)^2} \right),
\end{align*}
i.e., the probability that the log step size grows by less than
$\Delta/2$ per iteration on average is exponentially small in
$t_1 - t_0$. For
$t_1 \gg t_0 + 2 / \Delta \cdot \big(\log(b_T) - \log(\sigma^{(t_0)})\big)$
the probability becomes minuscule, and for $t_1 \to \infty$ it
vanishes completely. Hence, with full probability, we arrive at a
contradiction. The same logic contradicts the assumption that
$\sigma^{(t)} \geq b_H$ for all $t \geq t_0$. Hence, with full
probability, sub-episodes of very small and very large step size are of
finite length, and according to lemma~\ref{lemma:subsequence} the
sequence of step sizes returns infinitely often to the interval
$[b_T, b_H]$.

Next we show that there exists an infinite sub-sequence of iterations
fulfilling equation~\eqref{eq:sigma-in-range}. Again, assume the
contrary. We know already that $\sigma^{(t)}$ does not stay below $b_T$
or above $b_H$ for an infinite time. Hence, there must exist an infinite
sub-sequence fulfilling either
\begin{align}
	\sigma^{(t)} \in \Big[ b_T, \xi_{p_T}^f \big( m^{(t)} \big) \Big]
	\label{eq:rather-small}
\end{align}
or
\begin{align}
	\sigma^{(t)} \in \Big[ \eta_{p_H}^f \big( m^{(t)} \big), b_H \Big].
	\label{eq:rather-big}
\end{align}
Assume an infinite sub-sequence fulfilling equation~\eqref{eq:rather-small}.
For each of these iterations, the success probability is lower bounded
by $p_T$. Consider the case of consecutive successes. Until the event
\begin{align}
	\sigma^{(t)} \geq \xi_{p_T}^f \big( m^{(t)} \big)
	\label{eq:exceeds-lower}
\end{align}
the probability of success remains lower bounded by $p_T > 0$. The
condition is fulfilled after at most
$n^+ := \big(\log(b_H) - \log(b_T)\big) \big/ c_+$ successes in a row,
hence the probability of such an episode occurring is lower bounded by
$p_T^{n^+} > 0$. Lemma~\ref{lemma:subsequence} ensures the existence of
an infinite sub-sequence of iterations with this property. Each such
episode contains a point fulfilling either
equation~\eqref{eq:sigma-in-range} or equation~\eqref{eq:exceeds-lower}.
By assumption, the former happens only finitely often, which implies
that the latter happens infinitely often.

Hence, this case as well as the alternative assumption of an infinite
sequence fulfilling equation~\eqref{eq:rather-big}, handled with an
analogous argument, result in an infinite sub-sequence with the property
\begin{align*}
	\sigma^{(t)} \in \Big[ \eta_{p_H}^f \big( m^{(t)} \big), e^{c_+} \cdot b_H \Big].
\end{align*}
Following the same line of arguments as above, as long as
$\sigma^{(t)} \geq \eta_{p_H}^f \big( m^{(t)} \big)$, the probability of
an unsuccessful step is lower bounded by $1 - p_H > 0$. After at most
$n^- := \big(\log(b_T) - \log(b_H) + c_+\big) \big/ c_-$ unsuccessful
steps in a row, called an episode in the following, the step size must
have dropped below $b_T \leq \eta_{p_H}^f \big( m^{(t)} \big)$, hence
the probability of such an episode occurring is lower bounded by
$(1 - p_H)^{n^-} > 0$. According to lemma~\ref{lemma:subsequence}, an
infinite number of such episodes occurs.

By construction, these episodes consist entirely of unsuccessful steps,
and therefore $m^{(t)}$ remains unchanged for the duration of an
episode. This comes handy, since this means that also the target
interval
$\Big[ \xi_{p_T}^f \big( m^{(t)} \big), \eta_{p_H}^f \big( m^{(t)} \big) \Big]$
remains fixed, and this again means that at least one iteration of the
episode falls into this interval. We have thus constructed an infinite
sub-sequence of iteration within the above interval, in contradiction to
the assumption.
\end{proof}

\vspace*{1em}\hrule\vspace*{1em}

Finally, we provide details on the computations of success rates in the
examples. In section~\ref{section:quadratic-saddle}, the set where the
function $f(x_1, x_2) := a \cdot x_1^2 - x_2^2$ takes the value zero
consists of two lines through the origin in directions $(1, \sqrt{a})$
and $(-1, \sqrt{a})$. The cone bounded by these lines in the success
domain. The angle between their directions divided by $\pi$ corresponds
to the success rate. It is two times the angle between $(1, \sqrt{a})$
and $(1, 0)$, and hence $2 \cot^{-1}(\sqrt{a})$. Dividing by $\pi$
yields the result.

The threshold $p < \cot^{-1}(a) / \pi$ in
section~\ref{section:linear-ridge} follows the exact same logic, with
the difference that the square root vanishes in the direction vectors,
and we lose a factor of two, since the success domain is only one half
of the cone.

In section~\ref{section:jump}, the circular level line in the corner
point $(a, 1)$ is tangent to the vector $(-1, a)$. The angle
$\tan^{-1}(a)$ between $(-1, a)$ and $(-1, 0)$, divided by $2 \pi$, is a
lower bound on the success rate at $m = (a + \varepsilon, 1)$ with
$\sigma \ll \varepsilon$. The bound is precise for $\varepsilon \to 0$.

\end{document}